\documentclass{article}
\usepackage{iclr2027_conference,times}
\usepackage{amsmath,amsfonts,bm}

\def\eqref#1{equation~\ref{#1}}
\def\1{\bm{1}}

\DeclareMathAlphabet{\mathsfit}{\encodingdefault}{\sfdefault}{m}{sl}
\SetMathAlphabet{\mathsfit}{bold}{\encodingdefault}{\sfdefault}{bx}{n}

\newcommand{\R}{\mathbb{R}}

\usepackage{hyperref,url,booktabs,multirow,array,amsmath,amssymb,mathtools,microtype,xcolor,placeins}
\usepackage[ruled,vlined]{algorithm2e}
\usepackage{algpseudocode}
\usepackage{tikz}
\usepackage{iftex}
\ifXeTeX
  \usepackage{fontspec,xeCJK}
  \IfFontExistsTF{SimSun}{\setCJKmainfont{SimSun}[AutoFakeBold=true]}{\setCJKmainfont{FandolSong-Regular}[BoldFont=FandolSong-Bold]}
  \IfFontExistsTF{SimHei}{\setCJKsansfont{SimHei}}{\setCJKsansfont{FandolHei-Regular}}
  \IfFontExistsTF{FangSong}{\setCJKmonofont{FangSong}}{\setCJKmonofont{FandolFang-Regular}}
  \newenvironment{CJK}[2]{}{}
\else
  \usepackage{CJKutf8}
\fi
\usepackage{graphicx,longtable,amsthm}
\newtheorem{theorem}{Theorem}
\newtheorem{proposition}{Proposition}
\usetikzlibrary{arrows.meta,positioning,fit,matrix}
\hypersetup{colorlinks=true,citecolor=blue,linkcolor=blue,urlcolor=blue,
 pdftitle={BriLLM-MetaPred: A Representation-Prediction Unity Language Model with Adaptive Self-Measurement},
 pdfauthor={Junyuan Liu, Hongyue Lin, Jiale Hong, Yi Zhao, Hai Zhao}}

\SetKwInOut{kwIn}{Input}
\SetKwInOut{kwOut}{Output}

\newcommand{\Fast}{\boldsymbol{\phi}}
\newcommand{\Slow}{\boldsymbol{\theta}}

\newcounter{algorithm}

  \title{BriLLM-MetaPred: \\ A Representation-Prediction Unity\\Language Model with\\Adaptive Self-Measurement}

\author{Junyuan Liu$^{\#}$ \quad Hongyue Lin$^{\#}$ \quad Jiale Hong$^{\#}$\\
\textbf{Yi Zhao \quad Hai Zhao}\thanks{Supported by the Shanghai Jiao Tong University 2030 Initiative.}\\
Institute of Artificial General Intelligence\\
Shanghai Jiao Tong University, Shanghai, China\\
zhaohai@cs.sjtu.edu.cn}
\iclrfinalcopy
    
\begin{document}
\begin{CJK}{UTF8}{gbsn}
\raggedbottom
\maketitle
\fancyhead[L]{Preprint}

\begin{abstract}
We present \textbf{BriLLM-MetaPred}, the first engineering implementation of the \emph{MetaPred} machine-learning framework established by a mathematical theory of AGI, instantiated for language modeling. The theory argues that MetaPred is both necessary and sufficient as a machine-learning primitive for general cognition.
Taking this theoretical conclusion as a design premise, we report an implementation of the primitive for language modeling. A single continuously evolving dynamical state serves as both the context representation and the state from which predictions are read out. Continuous measurement of its own state reduces finite-depth inconsistency, maintains internal consistency, and provides an endogenous dynamical self-supervision signal in fixed-point form.
The single dynamical state of MetaPred intrinsically couples training and inference. As a machine-learning paradigm, MetaPred therefore exhibits symmetry across almost all key elements of machine learning, an intuitive and observable property of the mathematical optimum over the space of all machine-learning paradigms. BriLLM-MetaPred is neither another recurrent language model nor an extension of BriLLM. It provides a concrete bridge between two prior contributions: the construction and motivation of the MetaPred primitive provided by AGI theory, and the SiFu learning paradigm provided by BriLLM \citep{zhao2025brillm}. The latter's node-signal-flow design enables the primitive to run on currently available software and hardware accelerators. Our implementation of BriLLM-MetaPred achieves predictive performance comparable to a GPT baseline of the same size. This provides initial evidence for the engineering feasibility of the MetaPred framework established by the mathematical theory of AGI. MetaPred is the first machine-learning paradigm derived entirely from mathematical theory rather than assembled and incrementally improved through human design---and perhaps the last such paradigm.
\end{abstract}

\section{Introduction}
\label{sec:intro}

The mathematical theory of AGI \citep{zhao2026agitheory} establishes a single machine-learning primitive, MetaPred, as a necessary and sufficient condition for general cognition. MetaPred is not characterized by a particular network topology. Its defining property is that the representations formed by the model and the predictions it produces are two aspects of the same persistent dynamical state. Under endogenous fixed-point supervision, this state must maintain internal consistency by reducing its finite-depth inconsistency through self-measurement. We report the first engineering implementation of MetaPred for language modeling. We call this implementation BriLLM-MetaPred because it is built on the SiFu learning paradigm introduced by BriLLM \citep{zhao2025brillm}. SiFu provides the node-signal-flow carrier on which the MetaPred primitive can execute; this paper provides the primitive-level objective and a finite-budget self-measurement mechanism that make the carrier satisfy the requirements of MetaPred.

BriLLM \citep{zhao2025brillm} provides a node-signal-flow implementation that allows the MetaPred primitive to run on current accelerators, while the theory provides the primitive itself. This paper provides an executable bridge between them. The model is therefore the first language-model implementation of MetaPred, rather than a competitor to Transformer-based language models in the usual architectural sense or an incremental improvement to BriLLM.

Specifically, the model is a node-state language model with a single recurrent tensor. Token projection vectors are written into fixed interface rows, a recurrent operator updates the entire tensor through dynamical evolution, and the next-token distribution is read from designated output rows and a bounded auxiliary pool. The same evolving tensor carries both the input-conditioned representation and the state used for readout. Readout uses the output node region and several auxiliary sources, and maps them to a vocabulary distribution through a separate output projection. This projection is a learnable fully connected layer, but its input comes from the same dynamical state. Thus, representation and the information used for prediction come from a single dynamical state, and the output projection serves only as a parameterized mapping.

MetaPred has an endogenous fixed-point prediction objective that continuously maintains the consistency of its own state. External labels constrain only the output distribution and do not determine the trajectory of the internal state. The endogenous consistency loss therefore does not assume that the learned dynamics converge, nor that a smaller internal residual implies a correct prediction. As required by the mathematical theory, MetaPred must provide an adaptive predictive self-measurement mechanism that allows greater flexibility in improving efficiency and performance during both training and inference. Starting from the same input conditions, this mechanism compares the successor state of one evolution step with a deeper simulated state. Their difference is used to train the shared dynamics, supervise the direct-measurement process, and provide a bounded direction for state correction.

This paper makes three contributions. First, we identify and explain the MetaPred primitive as the central object of study, and instantiate the BriLLM node-signal-flow design \citep{zhao2025brillm} as its first concrete language-model implementation. Second, we define and implement self-measurement, a core mechanism of MetaPred. It constructs simulations of different lengths from the same state and uses the difference between their endpoints as an endogenous supervision signal. This constrains the consistency of the dynamics without external labels. The signal is used to train the shared dynamics, supervise direct-measurement parameters, and drive state correction. The implementation has three key features: training and inference share the same measurement process and evolution path, with no separate computational mechanisms for training and inference; measurement, updates, and correction are subject to finite budgets and dynamically scheduled by the controller according to state and history; and the gradient paths of the state, fast parameters, control parameters, and slow parameters are completely isolated. Third, we define evaluation benchmarks at approximately 13 million and 129 million parameters, with comparisons matched for task, mechanism, and training budget. At one tenth of the scale of GPT-1, our BriLLM-MetaPred implementation achieves predictive performance comparable to a GPT baseline of the same size.

\section{MetaPred as a Machine-Learning Paradigm for Implementing AGI}
\label{sec:paradigm}

According to the mathematical theory of AGI \citep{zhao2026agitheory}, MetaPred is the optimal solution obtained by optimization over all possible machine-learning paradigms under the constraints of its cognitive functional. It is also the unique solution up to isomorphism and the only paradigm type for all machine-learning models in an AGI system. It is therefore necessary to clarify the paradigm to which MetaPred belongs and distinguish it from existing representation-learning architectures and variants of the SiFu paradigm.

\paragraph{The Two Components of a Machine-Learning Paradigm}

Following the framework of \citet{zhao2025brillm}, any machine-learning paradigm can be defined by how it represents semantic objects and how it performs prediction. Mainstream representation-learning paradigms encode semantics as vectors and perform prediction by transforming these vectors with learned functions. The SiFu learning paradigm of BriLLM \citep{zhao2025brillm} instead maps semantic units to dedicated nodes and allows prediction to emerge through signal propagation among nodes. The BriLLM implementation is a spatial, component-based scheme for semantic representation, in which different components directly carry different semantics as specified by the user. This contrasts with the temporal scheme of conventional deep learning, in which the same vector changes over time.

SiFu and BriLLM jointly remove an implicit constraint of existing machine-learning paradigms: the tight binding between the model carrier and the dynamical state. Existing models generally require the architecture and dynamical state to align strictly in shape and topology and maintain this alignment throughout all evolution steps. SiFu relaxes this constraint at two levels. First, the model carrier provided by SiFu is a fully connected, bidirectional graph with maximal generality. Second, the BriLLM implementation allows the dynamical state to be only partially aligned with the model architecture: training and inference for each learning sample correspond to a particular subset of the maximal dynamical state space. Relaxing this alignment constraint allows more flexible tradeoffs between time and space costs. This is the fundamental mechanism through which BriLLM supports arbitrarily long inputs and outputs at constant model size.

\paragraph{The MetaPred Primitive}

The mathematical theory of AGI \citep{zhao2026agitheory} asks what internal relationship a system must maintain to achieve unbounded improvements in predictive performance and efficiency without unbounded external supervision. The theory argues that the answer is MetaPred.
We take this conclusion as a design premise and test whether the primitive can be executed and measured on a language task, and whether its implementation can produce reasonable results comparable to those of existing models.

The space of candidate paradigms considered by the mathematical theory of AGI is precisely the general case of weak coupling between the model carrier and the dynamical state defined by SiFu learning, where the dynamics can be defined as a fiber bundle over a topological carrier. Because MetaPred is more general than existing machine learning at the paradigm level, its engineering implementation must be based on the generalized topological structure suggested by SiFu.

MetaPred is the optimal solution for general cognition derived by the mathematical theory of AGI. It uses a single continuously evolving dynamical state to carry both representation and prediction. This is its defining distinction from all existing machine-learning paradigms. Consequently, MetaPred exhibits symmetry across all core elements of machine learning: between representation and prediction, between the training objective and endogenous supervision, and between training and inference. These symmetries are intuitive, observable properties of the mathematical optimum in the space of machine-learning paradigms.

\begin{itemize}
    \item \textbf{Representation-prediction unity}: both the semantic representation of the input context and the prediction to be read out come from the same dynamical state $\mathbf{S}_{t,u}$, without a separate encoder-decoder structure. The output projection $W_{\text{out}}$ is only a parameterized readout mapping and does not alter the unified nature of the state itself.
    \item \textbf{Endogenous self-supervision signal}: through self-measurement, the model obtains a fixed-point self-supervised loss from the difference between finite-step simulations starting from the same state. This constrains the state's internal consistency without external labels. The mechanism instantiates Lambek's fixed-point lemma in machine learning \citep{Lambek1968}.
    \item \textbf{Training-inference coupling}: the single dynamical state intrinsically couples training and inference. During training, self-measurement provides gradients for parameter updates; during inference, it provides a direction for state correction. Both share the same evolution logic, without separate training-forward and inference-forward paths.
\end{itemize}

Table~\ref{tab:paradigms} compares three paradigms in terms of their two defining dimensions and the resulting relationship between representation and prediction.

\begin{table*}[htp]
\caption{Comparison of machine-learning paradigms.}
\label{tab:paradigms}
\centering\small
\setlength{\tabcolsep}{5pt}
\begin{tabular}{>{\raggedright\arraybackslash}p{0.18\textwidth}>{\raggedright\arraybackslash}p{0.20\textwidth}>{\raggedright\arraybackslash}p{0.22\textwidth}>{\raggedright\arraybackslash}p{0.30\textwidth}}
\hline
& \textbf{Existing machine learning} & \textbf{BriLLM} & \textbf{BriLLM-MetaPred} \\
\hline
Carrier-dynamics relationship & Model architecture and dynamical state are strictly aligned & \multicolumn{2}{>{\raggedright\arraybackslash}p{\dimexpr0.52\textwidth+2\tabcolsep\relax}}{The model carrier (a fully connected graph topology) and dynamical state need not align in shape; the dynamics can evolve flexibly within the maximal space allowed by the carrier} \\
Structural carrier & Dedicated architecture (e.g., Transformer) & \multicolumn{2}{>{\raggedright\arraybackslash}p{\dimexpr0.52\textwidth+2\tabcolsep\relax}}{General fully connected graph topology with bidirectional connections between nodes} \\
\hline
Semantic representation & Fixed or learnable vectors carried by separate components & Nodes directly map to semantics, with encoding in the spatial dimension & Encoded entirely by the dynamical state $\mathbf{S}_{t,u}$, without a separate representation component \\
\hline
Prediction mechanism & Vector transformations passed from layer to layer & Dynamic emergence through signal propagation among nodes & Direct readout from the same dynamical state \\
\hline
Representation-prediction relationship & Performed independently in separate stages & Representation is carried by nodes; prediction is determined by dynamics & Fully unified: the same state performs both representation and prediction \\
\hline
Self-measurement & None & None & Endogenous finite-step self-comparison providing fixed-point self-supervision \\
\hline
Brain analogy & Loose similarities in local aspects & Similarity in semantic mapping by macroscopic components & Mechanistic consistency: representation and prediction are both realized through dynamical evolution, with coupled training and inference, matching the brain's predictive-coding mechanism \citep{friston2010free} \\
\hline
\end{tabular}
\end{table*}

Given these paradigm-level differences, MetaPred differs from any existing machine-learning paradigm or model, even when their descriptions share terms such as recurrence, self-measurement, attention, dynamics, representation, or prediction.

In our concrete implementation, BriLLM-MetaPred is a generative language model implementing the MetaPred algorithmic framework on the SiFu learning carrier. At the paradigm level, it is fundamentally different from GPT \citep{radford2018improving}, which is based on Transformer \citep{vaswani2017attention}. Its backbone structure follows from optimization in mathematical theory and allows greater generality in both the model carrier and the realization of the dynamics than the existing GPT paradigm.

\paragraph{Paradigm-Level Differences from Hopfield and Other Existing Models or Mechanisms}

Hopfield networks are a classical paradigm in which dynamical states carry memories: recurrent dynamics converge to stored attractors, and retrieval is convergence \citep{Hopfield1982}. Modern Hopfield networks further establish a strict equivalence with Transformer attention: their single-step update rule is a self-attention layer in the adiabatic approximation \citep{ramsauer2020hopfield}. This equivalence means that architecture choice alone is insufficient to explain the difference between MetaPred and Transformer; the substantive distinction is at the paradigm level.

The key difference is the role of the state. In a Hopfield network, the state carries memory, and the dynamical objective is completely determined by a set of stored attractors; retrieval selects among known attractors. In MetaPred, the state carries both representation and prediction, and there is no predefined correct attractor. Its fixed-point loss does not require convergence to any stored pattern, but only self-consistency under one-step evolution. MetaPred therefore need not address the Hopfield tradeoff between storage capacity and retrieval accuracy or predefine a set of memory patterns.

From the perspective of training objectives, MetaPred's self-measurement fixed point can be mistaken for another fixed-point loss analogous to Hopfield energy descent. The key difference is that a Hopfield network reaches a local or global minimum through monotonic descent of a Lyapunov-style energy loss, whereas MetaPred uses a self-measurement signal to implement self-supervised learning through relative mean-squared error. A unique fixed point follows from Banach's theorem only when the complete parameter-group cycle has a Lipschitz constant $L<1$ on a closed set. This is a conditional conclusion and does not guarantee convergence throughout training. MetaPred's fixed-point loss is therefore an endogenous supervision signal required by paradigm symmetry and serves a different role from Hopfield energy.

Several developments in machine learning have increased the freedom of dynamical states to evolve on the model carrier.

First, inspired by global workspace theory in cognitive science, \citet{goyal2022workspace} proposed a shared workspace as a general mechanism for coordinating modules and demonstrated its effectiveness in Transformer architectures. This mechanism is highly similar to self-coordination in MetaPred.

Second, \citet{gu2022s4} proposed a sequence-modeling layer based on structured state spaces, replacing Transformer's quadratic-complexity attention mechanism with a linear time-invariant system. As a linear time-invariant system, its dynamics are independent of input content.

Third, \citet{dehghani2019universal} proposed the Universal Transformer, which implements deep recursion through parameter sharing across layers, supports dynamic computation, and achieves Turing completeness and stronger compositional generalization.

These methods expand dynamical freedom in module coordination, sequence modeling, and deep recursion, respectively, but all assume a particular model architecture. MetaPred instead implements dynamical evolution on a general SiFu topology. Apart from the external task objective, its own fixed-point supervision signal, and the SiFu topology, its dynamics are independent of additional constraints imposed by a specific model architecture.

Finally, compared with existing work, whether complete model designs or individual local machine-learning mechanisms, MetaPred is not an assembly of existing engineering techniques. It is a complete end-to-end model design supported by mathematical theory. These known local results did not jointly inspire MetaPred. Rather, the design and engineering implementation of MetaPred supported by AGI mathematical theory show, in turn, that these designs each have a reasonable place in the context of an AGI system.

\paragraph{Paradigm-Level Connections to Brain Function}

Implementing MetaPred requires a distance functional to distinguish state signals. Within a framework of global signal transforms, time-domain amplitude and frequency-domain phase form a dual pair of representations. We use time-domain amplitude for compatibility with existing computer hardware and software; the mathematical theory leaves the concrete form of the distance functional open. The brain instead uses a frequency-domain mechanism that distinguishes states through oscillatory phase.

This difference does not constitute a paradigm-level distinction. Electrophysiological evidence indicates that the phase-amplitude mechanism of brain oscillations implements predictive coding at the neuronal-population level: the phase of slower oscillations reflects the excitability of postsynaptic neurons and thereby synchronizes synaptic input associated with faster oscillations; gamma oscillations encode bottom-up prediction errors, while beta oscillations participate in top-down prediction \citep{noda2017phase}. Cross-frequency coupling links hierarchical interactions in predictive coding to specific frequency bands \citep{arnal2012cortical}. Cortical network models further show that inference in predictive coding naturally produces oscillatory dynamics \citep{bastos2012canonical}.

The self-measurement residual of MetaPred structurally corresponds to this prediction-error signal: a one-step simulation is a prediction, a $K$-step simulation is deeper inference, and their difference is the error. In the absence of external labels, this error signal still drives the dynamics toward internal consistency. This agrees with the central principle of predictive coding, in which prediction-error minimization provides a local learning signal, and with the free-energy account of the brain \citep{friston2010free}.

Therefore, setting aside the specific choice of distance functional, BriLLM-MetaPred is highly consistent with the brain in its operating mechanism: the same dynamical evolution simultaneously implements representation and prediction, and finite-depth self-comparison produces endogenous supervision. This consistency is not a design premise---BriLLM-MetaPred is designed entirely from mathematical theory. Its correspondence with known brain mechanisms is an independent observation that warrants further serious investigation.

\section{Notation and General Operators}
\label{sec:notation}
All core symbols, operations, and basic rules used in this paper are defined together here.

\subsection{Core Symbols and Indexing Rules}
All tensor dimensions are specified at first use, and indexing conventions are consistent throughout the paper:
\begin{itemize}

    \item \textbf{Nodes and neurons}: BriLLM-MetaPred is implemented as a neural network in which each node contains a group of neurons, with no independently trainable connections between neurons within a node. A node's signal is a $d$-dimensional real-valued vector, where $d$ is the node's neuron feature width. Edges between nodes are directed, and two edges in opposite directions may coexist between a pair of nodes. Signal propagation, transformation, and gating are implemented using standard neural operators, including fully connected layers, layer normalization, and GELU activation. Let the total number of nodes be $N=|V|$. The node signals form a tensor $\mathbf{S}\in\mathbb{R}^{N\times d}$.
    
    \item \textbf{Node partitions}: the node set $V$ is divided into 5 disjoint subsets. Subsequent operators act on these partitions by default:
    \begin{itemize}
        \item[] $I$: the input node region, which receives projected external token vectors;
        \item[] $O$: the output node region, which provides signals for prediction readout;
        \item[] $E$: the self-coordination node, which performs global signal coordination;
        \item[] $M$: the modal node region, which stores learned modal reference vectors;
        \item[] $W$: the working node region, which stores signals for dynamic computation;
        \item[] $U = M \cup W$: the adaptive internal node region;
        \item[] $C = \{E\} \cup M$: the coordination region, comprising the self-coordination node and the modal nodes;
        \item[] $F = I \cup W \cup O$: the auxiliary readout region, which provides pooled features for prediction readout.
    \end{itemize}
    
    \item \textbf{Graph topology constraints}: the model is defined on a directed graph with node set $V$. The graph is \textbf{fully connected in both directions}: edges exist between any two nodes, subject to the following restriction. Signal paths between the input region $I$ and output region $O$ pass through $M\cup W$ or $E$; this prevents input signals from taking a direct shortcut to the output readout. The implementation does not maintain a separate edge matrix for each node pair. Instead, communication between nodes is performed by three node-axis mixing operators acting on $I\cup W$, $I\cup C$, and $W\cup C\cup O$, respectively, each using two shared affine maps. This structure, with ``dense mixing within subsets and sequential connections across subsets,'' both ensures full connectivity in terms of reachability and limits the parameter count according to subset sizes rather than the square of the number of node pairs.

    \item \textbf{Vocabulary}: $\mathcal{V}$ denotes the vocabulary, and $|\mathcal{V}|$ its size.

    \item \textbf{Dynamical state}: $\mathbf{S}_{t,u} \in \mathbb{R}^{N\times d}$, where $N$ is the total number of nodes and $d$ is the node dimension. The external token index $t$ increases whenever 1 new token is written, and the internal executed evolution-step index $u$ increases whenever 1 standard evolution step is executed.
    
    \item \textbf{Parameter-group index}: $q_u = u \bmod Q$, where $Q$ is the total number of parameter groups. Each group contains independent evolution transformation weights and is reused cyclically. Parameter groups store only transformation parameters; node signals are stored entirely in $\mathbf{S}_{t,u}$. We abbreviate $q_u$ as $q$ below when this causes no ambiguity.
    
    \item \textbf{Three families of learnable parameters}: the model's learnable parameters are divided into three families by time scale:
    \begin{itemize}
        \item[] \textbf{Slow parameters} $\Slow$: shared across sequences and updated by batch. They include the input projection matrix $E_{\text{in}}$, output projection matrix $W_{\text{out}}, b_{\text{out}}$, slot projection vectors $\{s_k, p_k\}$, modal reference vectors $\{\mathbf m_k\}$, all weights in the five stages of single-step evolution, readout-path weights, slow prediction parameters $p_{\Slow}$, and slow correction parameters $c_{\Slow}$.
        
        \item[] \textbf{Fast parameters} $\Fast_u$: local to each sequence and updated in response to events within it. They include fast prediction parameters $p_{\Fast_u}$ and fast correction parameters $c_{\Fast_u}$, and are fully reset at sequence boundaries. The slow prediction parameters $p_{\Slow}$ and fast prediction parameters $p_{\Fast_u}$ are collectively called the state predictor.
        \item[] \textbf{Control parameters} $\boldsymbol{\psi}$: shared across sequences, updated by policy trajectory, and assigned to the controller network.
    \end{itemize}
    
    \item \textbf{Sequence runtime tuple}: $z_u = (\mathbf{S}_u, \Fast_u, \mathcal{H}_u, \mathcal{B}_u, q_u, \xi_u)$ is updated with every executed evolution step $u$:
    \begin{itemize}
        \item[] $\mathcal{H}_u$: a causal observation-history buffer that stores state snapshots, residuals, errors, and other metadata from executed steps, but no weights;
        \item[] $\mathcal{B}_u = (B_u^m, B_u^w)$: the remaining budgets for self-measurement and fast-parameter updates, respectively;
        \item[] $\xi_u$: the random-number generator state, which ensures complete isolation between the random streams for simulated evolution and actual state evolution;
    \end{itemize}
    We abbreviate $\mathbf{S}_{t,u}$ as $\mathbf{S}_u$ when this causes no ambiguity.
All runtime components are reset after a sequence has been processed, at the sequence boundary. Only $\Slow$ and the control parameters $\boldsymbol{\psi}$ are shared across sequences.
\end{itemize}

\subsection{General Operators}
The operations used to express the model are encapsulated in the following general operators:
\begin{itemize}
    \item \textbf{Fully connected layer}: $\operatorname{Linear}_{W,b}(x) = Wx + b \in \mathbb{R}^m$, where $W\in\mathbb{R}^{m\times n}$ is the weight matrix and $b\in\mathbb{R}^m$ is the bias vector.
    \item \textbf{Activation function}: we use the GELU activation function $\operatorname{GELU}(\cdot)$, which acts independently on each vector component.
    \item \textbf{Softmax}: for a vector $x\in\mathbb{R}^n$, the $i$th component of $\operatorname{softmax}(x)$ is $\exp(x_i)/\sum_j\exp(x_j)$.
    \item \textbf{Root mean square}: for a matrix $X$ or vector $x$, $\operatorname{RMS}(X) = \sqrt{\operatorname{mean}(X^2)}$.
    \item \textbf{Pooling operator}: $\operatorname{MeanPool}(G; \mathbf{S}) = \operatorname{mean}_G(\mathbf{S}) = \frac{1}{|G|}\sum_{n\in G} \mathbf{S}_n \in \mathbb{R}^d$ averages all node signals in a subset $G$ to extract global/local features.
    \item \textbf{Layer normalization}: $\operatorname{LN}(x) = \frac{x - \operatorname{mean}(x)}{\sqrt{\text{var}(x) + 10^{-6}}} \odot \gamma + \beta$ operates along the channel dimension, with learnable parameters $\gamma, \beta$.
    \item \textbf{Gated mixing}:
    \begin{equation}
     \operatorname{GateMix}(\mathbf{Z}, w, F) = (1-w) \odot \mathbf{Z} + w \odot F(\mathbf{Z}),
     \label{eq:gatemix}
    \end{equation}
    where $w \in [0,1]$ is an elementwise gate weight and $F$ is a signal transformation, implemented by default as $\operatorname{LN} \to \operatorname{Linear} \to \operatorname{GELU} \to \operatorname{Linear}$. This is the basic unit of all computational stages in MetaPred.
    \item \textbf{Bounding operator}:
    \begin{equation}
     \text{Bounded}(\mathbf{v}, \mathbf{r}, c) = \mathbf{v} \cdot \frac{c \cdot \text{RMS}(\mathbf{r})}{\text{RMS}(\mathbf{v}) + c \cdot \text{RMS}(\mathbf{r}) + 10^{-6}},
     \label{eq:bounded}
    \end{equation}
    smoothly rescales the magnitude of $\mathbf{v}$ in proportion to the RMS of a reference $\mathbf{r}$. The scaling factor decreases monotonically as $\text{RMS}(\mathbf{v})$ increases.
    \item \textbf{L2 normalization}: $\operatorname{norm}_2(x)=x/(\|x\|_2+\epsilon)$, where $\epsilon$ is a small constant.
    \item \textbf{Two clipping operations}:
    \begin{itemize}
        \item Norm clipping:
        \begin{equation}
         \operatorname{ClipNorm}(g, c) = g \cdot \min\left(1, \frac{c}{\|g\|_2 + \epsilon}\right),
         \label{eq:clipnorm}
        \end{equation}
        limits the norm of a gradient/update to at most $c$ without changing its direction, and is used for slow-gradient clipping and fast-parameter updates;
        \item Scalar clipping:
        \begin{equation}
         \operatorname{ClipTo}(\rho, a, b) = \min(\max(\rho, a), b),
         \label{eq:clipto}
        \end{equation}
        restricts a scalar to $[a,b]$ and is used to clip policy probability ratios.
    \end{itemize}
    
    \item \textbf{Normalized weighted mixing}:
    \begin{equation}
     \operatorname{WMix}(\{u_k\}, \{w_k\}) = \frac{\sum_k w_k u_k}{\sum_k w_k + 10^{-12}},
     \label{eq:wmix}
    \end{equation}
    computes a normalized weighted sum of vectors using nonnegative weights, and is used to aggregate modal features.
    
    \item \textbf{Gating map}:
    \begin{equation}
     \operatorname{Gate}(x; W, b) = \sigma(Wx + b),
     \label{eq:gate-map}
    \end{equation}
    where $\sigma$ is the componentwise sigmoid, used for coordination, broadcasting, readout, and related modules.
    \item \textbf{Relative MSE}:
    \begin{equation}
     \operatorname{RelMSE}(A,B) = \frac{\|A - \operatorname{sg}(B)\|_F^2 / n(A)}{\|\operatorname{sg}(B)\|_F^2 / n(A) + 10^{-6}},
     \label{eq:relmse}
    \end{equation}
    where $\operatorname{sg}(\cdot)$ is the stop-gradient operator and $n(A)$ is the number of scalar elements in $A$, preventing gradients from propagating through the target.
    
    \item \textbf{Measurement vector}: \begin{equation}
\begin{aligned}
\operatorname{MeasVec}(X) = [&\operatorname{mean}(X), \operatorname{RMS}(X), \operatorname{mean}|X-\operatorname{mean}(X)|,\\
&\min(X), \max(X), \operatorname{mean}\mathbf{1}[X\text{ finite}]],
\end{aligned}
\label{eq:measvec}
    \end{equation}
    maps an arbitrary tensor to 6 statistical features for trend supervision.
    \item \textbf{State summary}: for a state $\mathbf{S}$, define
\begin{equation}
s(\mathbf{S})=\operatorname{first}_{d_h}\!\left[
P_w\!\left(\operatorname{mean}_I(\mathbf{S})\right);
P_w\!\left(\operatorname{mean}_O(\mathbf{S})\right);
P_w(\mathbf{S}_E);
P_w\!\left(\operatorname{mean}_U(\mathbf{S})\right)
\right]\in\R^{d_h},
\label{eq:state-summary}
\end{equation}
where $P_w$ is a channel projection, $d_h$ is the summary dimension, and $\operatorname{first}_{d_h}$ selects the first $d_h$ components.
The controller uses a control state summary $s_c(\mathbf{S}_u)$, where the subscript $c$ indicates that the summary serves as an input to the controller. It is a different function from the summary $s(\mathbf{S})$ used for trend computation.
\item \textbf{Trend computation}: $\operatorname{Trend}(s, \Delta u, h) = \frac{s^{(h)}(\mathbf{S}') - s^{(h)}(\mathbf{S})}{\Delta u}$, where $s$ is the state summary defined in Eq.~(\ref{eq:state-summary}), $\mathbf{S}'$ is the state obtained after $\Delta u$ evolution steps from $\mathbf{S}$, $h=1$ denotes the first-order trend (velocity), $h=2$ denotes the second-order trend (acceleration), and $\Delta u$ is the evolution-step interval.
\end{itemize}

The BriLLM-MetaPred implementation contains two neural networks with independent parameters. The state network is implemented as a SiFu graph, contains the slow parameters $\Slow$ and fast parameters $\Fast_u$, and performs main state evolution. The controller network is a feedforward policy-value network that contains the control parameters $\boldsymbol{\psi}$ and configures runtime conditions. The two networks interact through the state tensor and runtime conditions.

\section{The Two Basic Dynamical Processes of MetaPred}

The dynamical construction of MetaPred contains only two basic processes: \textbf{main state evolution} and \textbf{self-measurement}. All other parts of the model modify or read from these two processes. Both processes share the same standard dynamics routine $\Phi_q^{\text{step}}$: main state evolution executes this routine on the runtime state and advances the evolution step $u$; self-measurement recursively executes the routine on a copy of the same state under controller scheduling, leaving the evolution step $u$ unchanged. Self-measurement thus forms a second-order recursive dynamical subprocess managed by the controller.
Main state evolution has two modes, depending on how external input is injected: \textbf{evolution with input} and \textbf{evolution without input}. Evolution with input is further divided into \textbf{encoding evolution} and \textbf{generation evolution}, depending on the content of the external input; evolution without input is \textbf{autonomous evolution}. Predictive readout reads directly from the current dynamical state without advancing the evolution step $u$ and is not part of the evolution process. The overall MetaPred model architecture is shown in Figure~\ref{fig:metapred-arch}.

\begin{figure}[htbp]
\centering\includegraphics[width=0.64\linewidth]{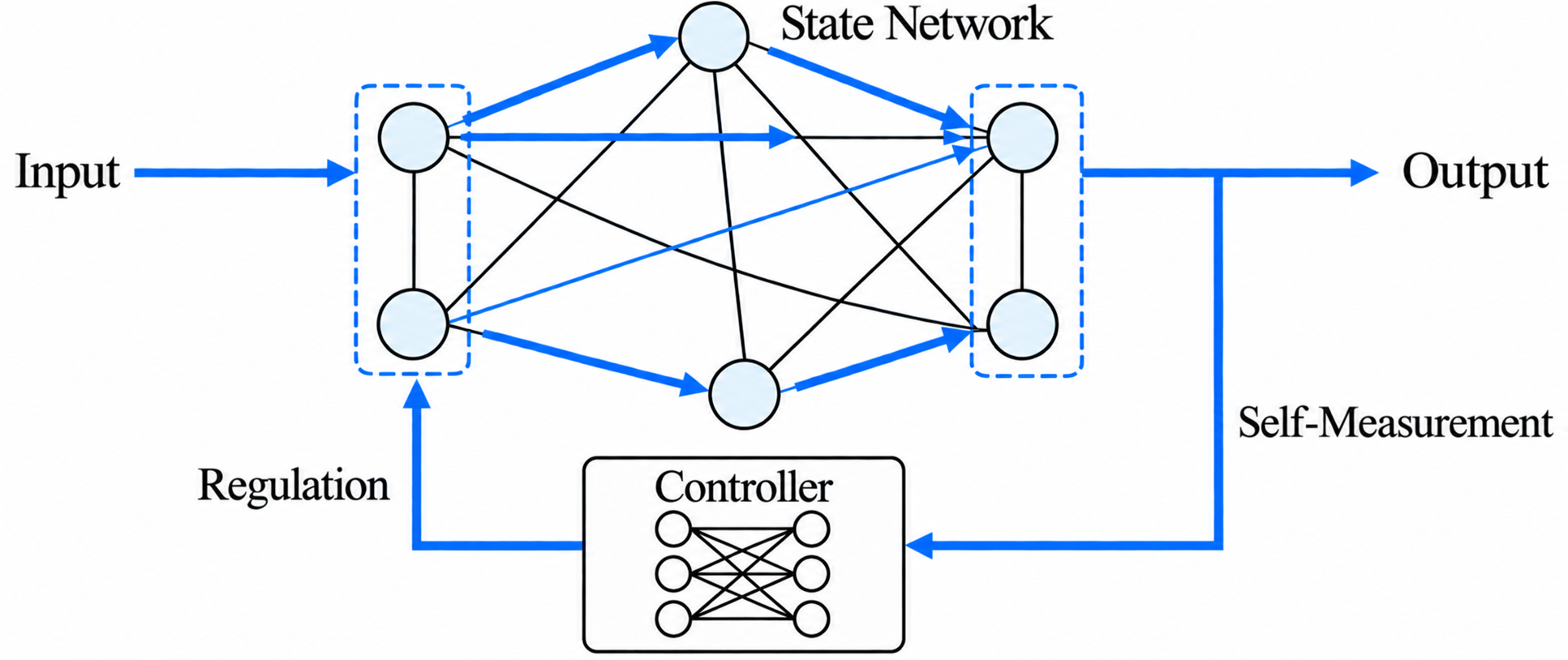}
\caption{The MetaPred model architecture}
\label{fig:metapred-arch}
\end{figure}

Based on these two basic processes, the complete execution flow of BriLLM-MetaPred can be divided into the following six layers:
\begin{itemize}
    \item \textbf{External interface layer}: the triggering conditions for main state evolution. It receives input tokens, writes their projection vectors into the input node region $I$, and triggers encoding evolution;
    \item \textbf{Core dynamics layer}: main state evolution itself. It updates the state through single-step evolution $\Phi_q^{\text{step}}$ and supports autonomous evolution and generation evolution;
    \item \textbf{Self-measurement layer}: self-measurement itself. It constructs two simulations with different numbers of steps from the same state, or provides an approximate residual through direct measurement, and computes the difference between the two as an endogenous supervision signal;
    \item \textbf{Decision layer}: a scheduling modification of the two basic processes. Based on the causal state summary and historical observations, the controller selects the measurement tier, correction strength, and fast-parameter update strategy for the current evolution step;
    \item \textbf{Readout layer}: a readout modification of main state evolution. It extracts features from the output region $O$ and coordination region $C$ of the dynamical state, adds a semantic residual, a structured residual, and slot alignment, and generates a vocabulary distribution through the output projection;
    \item \textbf{Update layer}: a learning modification of self-measurement. It separately updates the slow parameters $\Slow$, fast parameters $\Fast_u$, and control parameters $\boldsymbol{\psi}$, with fully isolated gradient paths.
\end{itemize}
Of these six layers, the external interface, core dynamics, self-measurement, readout, and update layers act directly on the same state tensor and share a single evolution path. The decision layer is implemented by an independent controller network: the controller reads a causal state summary from the state tensor and auxiliary metadata and outputs actions that affect the conditions for subsequent evolution.

The dynamical evolution process of MetaPred is shown in Figure~\ref{fig:metapred-dyn}.

\begin{figure}[htbp]
\centering\includegraphics[width=\linewidth]{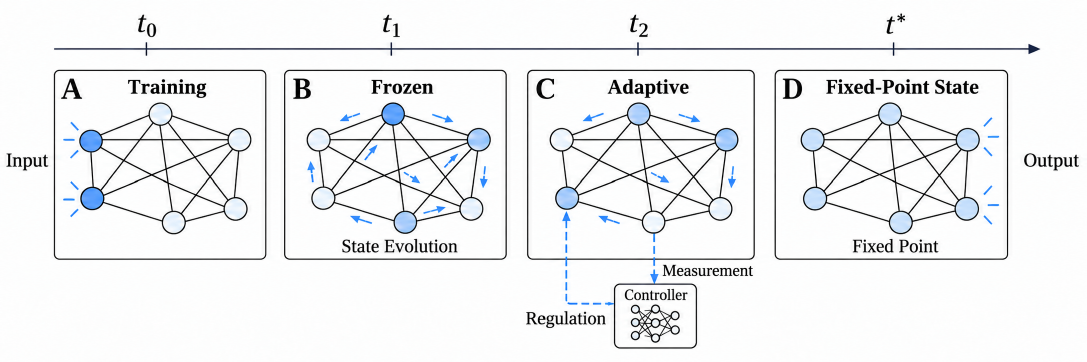}
\caption{Dynamical evolution in MetaPred}
\label{fig:metapred-dyn}
\end{figure}

The way MetaPred operates is not freely selectable; its three operating modes follow directly from the single dynamical state and the heterogeneity of parameter timescales. The three modes share the same evolution structure and differ only in the operations and parameter updates they permit. Table~\ref{tab:run-modes} compares the three operating modes. Evaluation metrics can be computed in all three modes, but they are not used for parameter updates in the frozen or adaptive modes.
\begin{table}[htp]
\caption{Comparison of the three operating modes}
\label{tab:run-modes}
\centering\small
\setlength{\tabcolsep}{5pt}
\begin{tabular}{>{\raggedright\arraybackslash}p{0.34\linewidth}>{\centering\arraybackslash}p{0.24\linewidth}>{\centering\arraybackslash}p{0.16\linewidth}>{\centering\arraybackslash}p{0.16\linewidth}}
\toprule
Feature & \textsc{Train} & \textsc{Frozen} & \textsc{Adaptive} \\
\midrule
External task loss used for parameter updates & Enabled & Disabled & Disabled   \\
Self-measurement diagnostics & Enabled & Enabled & Enabled \\
State correction & Enabled & Disabled & Enabled \\
Fast-parameter updates & Enabled & Disabled & Enabled \\
Slow-parameter updates & Enabled (per batch) & Disabled & Disabled \\
Control-parameter updates & Enabled (delayed PPO updates) & Disabled & Disabled \\
Sequence runtime reset & Sequence boundaries & Sequence boundaries & Sequence boundaries \\
\bottomrule
\end{tabular}
\end{table}

The three modes serve different purposes:
\begin{itemize}
    \item[] \textsc{Train} mode is used for training: the external task loss, self-measurement, fast-parameter updates, state correction, slow-parameter updates, and control-parameter updates are all enabled.
    \item[] \textsc{Frozen} mode is used for evaluation and self-diagnosis: all parameters are frozen and state correction is disabled. The external task loss can be computed on the validation set to monitor model capability, while self-measurement is executed in full and outputs residual signals. Because self-measurement requires only state evolution itself, the residual in this mode directly measures the model's internal consistency.
    \item[] \textsc{Adaptive} mode simulates real deployment: relying only on internal signals, it permits sequence-local fast-parameter updates and state correction while keeping the slow parameters $\Slow$ and control parameters $\boldsymbol{\psi}$ unchanged. All three modes execute dynamical evolution and self-measurement in full, differing only in the scope of permitted updates and corrections. All sequence runtime states are reset at sequence boundaries; only $\Slow$ and $\boldsymbol{\psi}$ are shared across sequences.
\end{itemize}

This division into three modes follows directly from two theoretical constraints of MetaPred, rather than from engineering optimization choices. First, a single state requires training and inference to share the same evolution structure, so all three modes must use the same dynamics. Second, endogenous supervision and training-inference coupling require self-measurement to be executed in full in every mode, so differences between the three modes are restricted to which parameter updates and state corrections are enabled.
When subsequent sections discuss parameter updates or state correction, they specify the operating modes in which the operation applies. The next two subsections define the two basic processes in order: main state evolution and self-measurement.

\subsection{Main State Evolution}
\label{sec:main-evolution}

All machine-learning operations of the state network are based on evolution over the main state space, which is a standard signal-dynamics routine.

\textbf{Single-step evolution}, $\Phi_q^{\text{step}}(\cdot; \chi): \mathbb{R}^{N\times d} \to \mathbb{R}^{N\times d}$, consists of five sequential gated-mixing stages. Here $\chi$ is the set of historical-dependency conditions for the evolution step, including runtime conditions such as the parameter-group index, invocation stage, input-window contents, and remaining budgets. The five stages share the same pattern---gated mixing, as defined in Eq.~(\ref{eq:gatemix}). Their mathematical definitions follow.

\begin{enumerate}
    \item \textbf{Input mixing.} Apply the node-axis mixing operator to three node subsets in sequence:
    \begin{equation}
     \mathcal{M}_G^q(Z)=Z+B_G^q\!\left(\operatorname{GELU}\!\left(A_G^q(\operatorname{LN}(Z))\right)\right),
     \label{eq:node-mixer}
    \end{equation}
    obtaining $Z^3=\mathcal{M}_{W\cup C\cup O}^q(\mathcal{M}_{I\cup C}^q(\mathcal{M}_{I\cup W}^q(\mathbf{S}+e_{\text{stage}})))$. None of the node subsets on which these three node-axis mixing operators act includes a direct connection between $I$ and $O$. The signal path from the input region to the output region therefore passes through $M\cup W$ or $E$.
    The operators $A_G^q,B_G^q$ are two fully connected layers along the node axis.

    \item \textbf{Role assignment.} For each internal node $n\in U$, compute four-dimensional role weights:
   \[
     \pi_n=\operatorname{softmax}\!\left(\frac{a_n^{\rm role}+\Delta_n}{\tau}\right),\qquad
     \Delta_n=\alpha\sigma(\gamma)c([Z^3_n,Z^3_E])+b_{n,q}.
    \]
    Here $a_n^{\rm role}\in\mathbb{R}^4$ is a learnable role prior for each node and parameter group; $\alpha,\gamma\in\mathbb{R}$ are learnable scalars; $\sigma$ is the sigmoid function; $c$ is a shared two-layer mapping $\mathrm{Linear}(2d,2d)\to\mathrm{GELU}\to\mathrm{Linear}(2d,4)$; and $\tau$ is a constant temperature set to $\tau=1$. The learnable bias $b_{n,q}\in\mathbb{R}^4$ is specific to each node and parameter group, with one $\mathbb{R}^4$ bias for every node in every parameter group.

    The four components are working, modal, fast, and slow, in that order. The working and modal components mix two shared channel transformations:
    \[
     \bar Z_n=Z^3_n+\frac{\pi_n^{\rm w}}{\pi_n^{\rm w}+\pi_n^{\rm m}}F^q_{\rm w}(Z^3_n)
     +\frac{\pi_n^{\rm m}}{\pi_n^{\rm w}+\pi_n^{\rm m}}F^q_{\rm m}(Z^3_n),
    \]
    where $F^q_{\rm w},F^q_{\rm m}$ are two shared channel transformations with the architecture $\operatorname{LN}\to\operatorname{Linear}\to\operatorname{GELU}\to\operatorname{Linear}$.

    \item \textbf{Coordination aggregation.} The self-coordination node $E$ compresses global information into $Z_E^c$:
    \begin{align}
     \mathbf m_k^*&=(1-\alpha_m)\mathbf m_k+\alpha_m\operatorname{norm}_2\!\left(\sum_i\omega_{k,i}\bar Z_i\right),\nonumber\\
     \omega_{k,i}&=\operatorname{softmax}_i(\mathbf m_k^\top\bar Z_i),\nonumber\\
     Z_E^I&=\bar Z_E+g_E(\bar Z_EW_E-\bar Z_E),\nonumber\\
     c_E&=\sum_k\operatorname{softmax}_k\bigl((Z_E^I)^\top\mathbf m_k^*\bigr)\mathbf m_k^*,\nonumber\\
     Z_E^c&=\bar Z_E+\operatorname{bounded}\bigl((Z_E^I-\bar Z_E)+g_c c_E,\,Z_E^I,\,c_E^{\max}\bigr).
     \label{eq:coordination}
    \end{align}
    Here $\{\mathbf m_k\}_{k=1}^{|M|}$ are learnable modal reference vectors among the slow parameters; $\mathbf m_k^*$ is the dynamic reference used for the current coordination operation, while the slow parameter $\mathbf m_k$ retains its original value; $\alpha_m,g_E,g_c$ are learnable scalars; and $c_E^{\max}$ is an upper-bound constant.

    \item \textbf{Broadcast distribution.} The self-coordination node distributes a broadcast offset to each $n\in U$:
    \begin{align}
     m_U&=\operatorname{WMix}\bigl(\{\bar Z_i\}_{i\in U},\{\pi_i^{\rm m}\}_{i\in U}\bigr),\nonumber\\
     b_n^{\rm br}&=\tanh\!\left(W_b[\operatorname{LN}([Z_E^c,m_U]);\operatorname{LN}(\bar Z_n)]\right),\nonumber\\
     \delta_n^b&=\operatorname{bounded}\!\left(\kappa_b\sigma(\gamma_b)c_b\pi_n^{\rm w}b_n^{\rm br},\,b_n^{\rm br},\,c_b^{\max}\right),\nonumber\\
     Z_n^b&=\bar Z_n+\delta_n^b.
     \label{eq:broadcast}
    \end{align}
    Here $\kappa_b,c_b,c_b^{\max}$ are the broadcast strength, control, and upper-bound constants, respectively. The broadcast targets the nodes in $U$; $I,O,E$ retain their original values.

    \item \textbf{Gated writeback.} Each node writes its intermediate signal back into the state through fast and slow gates:
    \begin{align}
     h_n&=\operatorname{Gate}\!\left([\mathbf{S}_n,\widehat Z_n];G_{\rm f}\right)\rho_n^{\rm f}
         +\operatorname{Gate}\!\left([\mathbf{S}_n,\widehat Z_n];G_{\rm s}\right)\rho_n^{\rm s},\nonumber\\
     \mathbf{S}'_n&=(1-h_n)\mathbf{S}_n+h_n\widehat Z_n,
     \label{eq:gate}
    \end{align}
    where $G_{\rm f},G_{\rm s}$ are two fully connected layers; $\rho_n^{\rm f}=\pi_n^{\rm f}$ and $\rho_n^{\rm s}=\pi_n^{\rm s}$ are the fast and slow components, respectively, of the four-dimensional role weights from the role-assignment stage; and $\widehat Z_n$ is the intermediate node signal after the first four stages.
    The output $\mathbf{S}'=\Phi_q^{\text{step}}(\mathbf{S}; \chi)$ is the result of one evolution step.
\end{enumerate}

Single-step evolution updates only the node signals $\mathbf{S}$, leaving the parameter-group index $q$, slow parameters $\Slow$, and history buffer $\mathcal{H}_u$ unchanged. Each evolution step injects a stage projection vector $e_{\text{stage}}$ according to the invocation stage (encoding, autonomous evolution, or generation) to distinguish the current stage of evolution. The vector $e_{\rm stage}\in\mathbb{R}^d$ is one of three learnable vectors corresponding to encoding, autonomous evolution, and generation; the same $e_{\rm stage}$ is added to each node during input mixing. State readout is an operation independent of evolution: it reads the predictive distribution directly from the current dynamical state without advancing the evolution step $u$.

The single-step evolution above is a concrete instantiation of the MetaPred paradigm on the node graph, with a structure determined by the five-stage signal lifecycle. These five stages correspond exactly to the complete lifecycle of a signal, from input, through internal coordination, to output:
input mixing ensures that external token information is incorporated into the current state;
role assignment allows each node to dynamically determine its emphasis among the four functions of ``working/modal/fast/slow,'' enabling flexible reorganization of the representation;
coordination aggregation compresses the global context through the self-coordination node, providing a consistent ``attention'' anchor for the entire state;
broadcast distribution returns the coordination signal to the internal nodes, making global information local;
gated writeback controls the magnitude of state updates through fast and slow pathways, ensuring the stability and correctability of evolution.
All five stages are based on a unified gated-mixing operator. They embody MetaPred's core principle of a ``single continuously evolving dynamical state'': rather than using a model design with separate encoders or decoders, all operations iteratively transform the same state tensor along a unified evolution path.

Evolution over $K$ consecutive steps is defined as a composition across parameter groups, termed \textbf{multi-step evolution}:
    \[
    \Phi_q^{(K)} = \Phi_{q+K-1}^{\text{step}} \circ \Phi_{q+K-2}^{\text{step}} \circ \dots \circ \Phi_q^{\text{step}}
    \]
When $K=Q+1$, $\Phi_q^{(Q+1)}$ starts at phase $q$ and executes $Q+1$ steps to reach phase $q+1$; equivalently, starting at phase $q+1$, the $Q$-step composition $\Phi_{q+1}^{(Q)} = \Phi_{q+Q}\circ\cdots\circ\Phi_{q+1}$ forms a complete parameter-group cycle, denoted by $\mathcal{C}_{q+1} = \Phi_{q+1}^{(Q)}$, corresponding to simulation-based measurement with $K=Q+1$. Multi-step evolution leaves the actual evolution-step counter $u$ unchanged and is used only for simulation and self-measurement.

All actual state evolution (encoding, autonomous evolution, and generation) advances $u$; all simulated evolution (self-measurement, previews, and alternative evaluation) leaves $u$ unchanged, ensuring complete isolation between runtime states and simulated states. Self-measurement belongs to the latter category: it recursively reuses the same standard dynamics routine $\Phi_q^{\text{step}}$, with its invocation timing, simulation depth, and measurement tier determined by the controller.

\subsection{Self-Measurement}
\label{sec:self-measurement}
This subsection defines the second basic process of MetaPred: self-measurement, a second-order recursive dynamical subprocess managed by the controller. It shares the same standard dynamics routine $\Phi_q^{\text{step}}$ with main state evolution. The difference is that main state evolution executes the routine on the runtime state and advances the evolution step $u$, whereas self-measurement recursively executes the routine on a copy of the same state under controller scheduling, leaving the evolution step $u$ unchanged. Self-measurement is therefore a nested process built on main state evolution, rather than being a static loss term independent of the dynamics.

The basic idea of self-measurement is to construct two simulations with different numbers of steps from the same state at an executed evolution step, using the difference between their endpoints as a signal. The 1-step simulation represents the model's direct prediction of the next step, whereas the $K$-step simulation represents the accumulated outcome of repeated dynamical evolution. Their difference measures the finite-depth inconsistency of the dynamics. This signal has four uses: training the shared dynamics, supervising direct measurement, driving state correction, and providing internal-consistency diagnostics. The first three uses correspond to parameter updates in \textsc{Train} or \textsc{Adaptive} mode; the fourth does not depend on any parameter update and serves as a self-diagnostic reading in \textsc{Frozen} mode.

The controller can choose among four measurement tiers: skipping measurement, direct measurement, short-horizon simulation, and long-horizon simulation (the latter two subdivide simulation-based measurement by depth). Skipping measurement produces a zero signal; direct measurement provides an approximate residual using only the direct-measurement parameters; simulation-based measurement explicitly simulates the state, with its depth determined by $K$. The direct-measurement parameters consist of the slow prediction parameters $p_{\Slow}$ and slow correction parameters $c_{\Slow}$ among the slow parameters, and the fast prediction parameters $p_{\Fast_u}$ and fast correction parameters $c_{\Fast_u}$ among the fast parameters.
When $K=2$ (termed short-horizon simulation), the simulation supervises direct measurement; when $K=Q+1$ (termed long-horizon simulation), the simulation corresponds to a complete parameter-group cycle and is used for deeper diagnostics. Short-horizon and long-horizon simulations use the same mechanism and differ only in the value of $K$.

Because simulating state evolution requires additional computation, the model also learns direct-measurement parameters, with the controller allocating measurement and update budgets. This subsection explains how these mechanisms work together in the execution order of a single internal evolution call. At the beginning of the call, the controller determines the measurement tier and update requests from the current state and historical observations; the residual is then obtained according to the selected tier. If a fast-parameter update is requested, direct measurement must be recomputed using the updated parameters. The model stores its prediction of the next state and executes one actual evolution step; it then applies state correction as needed and updates to the next state. Once the actual outcome is available, the model records the prediction error and measurement information to provide feedback for subsequent calls.

\paragraph{Simulation-based measurement}

Let $\chi_u$ denote the instance of the conditions at evolution step $u$, and let $\chi_u^{\text{meas}}$ and $\chi_u^{\text{real}}$ denote the conditions used for simulation-based measurement and actual state evolution, respectively. At the executed evolution step $u$, hold $\chi_u^{\text{meas}}$ fixed and construct two deterministic simulations:
    \begin{align*}
        \mathbf{S}_{\text{pred}} &= \Phi_q^{\text{step}}(\mathbf{S}_u; \chi_u^{\text{meas}}) \tag{1-step simulation} \\
        \mathbf{S}_{\text{sim}}^{(K)} &= \Phi_q^{(K)}(\mathbf{S}_u; \chi_u^{\text{meas}}) \tag{$K$-step simulation}
    \end{align*}
Their difference is the self-measurement residual:
    \[
    \Delta_{\text{self}}^{(K)}(u) = \mathbf{S}_{\text{sim}}^{(K)} - \mathbf{S}_{\text{pred}}.
    \]
Here $\Phi_q^{\text{step}}$ and $\Phi_q^{(K)}$ both use the same standard dynamics routine. Self-measurement introduces no new dynamics operator; it only executes the same dynamics recursively.

The residual has the termwise decomposition
\begin{equation*}
 \Delta_{\text{self}}^{(K)}(u)=\sum_{k=1}^{K-1}\left(\mathbf{S}_{\text{sim}}^{(k+1)}-\mathbf{S}_{\text{sim}}^{(k)}\right),\qquad
 \|\Delta_{\text{self}}^{(K)}(u)\|_F\leq\sum_{k=1}^{K-1}\|\mathbf{S}_{\text{sim}}^{(k+1)}-\mathbf{S}_{\text{sim}}^{(k)}\|_F,
\end{equation*}
where $\mathbf{S}_{\text{sim}}^{(1)}=\mathbf{S}_{\text{pred}}$.

When $K=Q+1$, the $K$-step simulation corresponds to a complete parameter-group cycle starting from $\mathbf{S}_{\text{pred}}$. Defining the composition starting at phase $q+1$ over $Q$ steps as $\mathcal{C}_{q+1} = \Phi_{q+1}^{(Q)}$, we have
\begin{equation*}
 \Delta_{\text{self}}^{(Q+1)}(u) = \mathcal{C}_{q+1}(\mathbf{S}_{\text{pred}}) - \mathbf{S}_{\text{pred}} = \Phi_q^{(Q+1)}(\mathbf{S}_u) - \mathbf{S}_{\text{pred}}.
\end{equation*}
The residual satisfies the telescoping decomposition property.

The self-measurement loss uses sequence-level normalization:
\begin{equation}
d_{\text{self},u} = \frac{\|\Delta_{\text{self}}^{(K)}(u)\|_F}{Nd}, \qquad
\mathcal{L}_{\text{self}} = \frac{1}{|\mathcal{D}_{\text{self}}|} \sum_{u\in\mathcal{D}_{\text{self}}} d_{\text{self},u}^2.
 \label{eq:l-self}
\end{equation}
Here $\mathcal{D}_{\text{self}}$ is the set of evolution steps at which simulation-based measurements have been executed; residuals with $K=Q+1$ do not participate in direct-measurement parameter calibration.

\paragraph{State correction based on self-measurement}

If the controller requests state correction, a correction term is added to the actual successor $\mathbf{S}_{u+1}^{\text{pre}} = \Phi_q^{\text{step}}(\mathbf{S}_u; \chi_u^{\text{real}})$:
\begin{equation}
\delta\mathbf{S}_u = \eta_{\text{eff}} \widetilde{\Delta}_{\text{self}}^{(K)}(u),
\qquad
\mathbf{S}_{u+1} = \mathbf{S}_{u+1}^{\text{pre}} + \delta\mathbf{S}_u.
\label{eq:correction}
\end{equation}
Here $\widetilde{\Delta}_{\text{self}}^{(K)}(u)$ is the available residual from simulation-based or direct measurement; $\eta_{\text{eff}}$ is the effective strength obtained by reducing the controller-requested strength according to amplitude limits.
Let controller action $a_u^{\rm corr}$ correspond to a nonnegative correction strength $\eta_u^{\text{req}}$, and let $\mathbf{S}_{u+1}^{\text{pre}}-\mathbf{S}_u$ be the state change due to actual state evolution. Define two amplitude limits:
\begin{itemize}
    \item[] $c_{\text{abs}}$: an absolute upper bound on the overall correction amplitude;
    \item[] $c_{\text{rel}}$: an upper bound on the ratio of the correction amplitude to the amplitude of the current actual evolution step.
\end{itemize}
Using $\operatorname{RMS}(\cdot)$ to measure the overall amplitude of a tensor and a small constant $\epsilon>0$ to avoid a zero denominator, the effective strength is
\begin{equation}
\eta_{\text{eff}} = \min\left\{
\eta_u^{\text{req}},
\frac{c_{\text{abs}}}{\operatorname{RMS}(\widetilde{\Delta}_{\text{self}}^{(K)}(u))+\epsilon},
\frac{c_{\text{rel}}\,\operatorname{RMS}(\mathbf{S}_{u+1}^{\text{pre}}-\mathbf{S}_u)}
     {\operatorname{RMS}(\widetilde{\Delta}_{\text{self}}^{(K)}(u))+\epsilon}
\right\}.
\label{eq:effective-correction-strength}
\end{equation}

These three values correspond to the controller's request, the absolute amplitude limit, and the relative amplitude limit, respectively; their minimum determines the actual correction. When no correction is requested, the operating mode prohibits correction, or the current residual is unavailable, the result of actual state evolution is used:
\begin{equation}
\mathbf{S}_{u+1} = \mathbf{S}_{u+1}^{\text{pre}}.
\label{eq:no-correction}
\end{equation}
The corrected state lies on the line segment between the two simulation endpoints (this holds only for simulation-based residuals).

\paragraph{Supervision of direct-measurement parameters}

Direct measurement uses the direct-measurement parameters to execute two prediction stages in sequence.

The first stage predicts the one-step successor before correction. Denoting the current state by $\mathbf{S}_u$ and the runtime conditions by $\chi_u$, we have
\[
\widehat{\mathbf{S}}_{u+1}^{\text{pre}}
= \mathbf{S}_u
+ p_{\Slow}(\mathbf{S}_u, \chi_u)
+ p_{\Fast_u}(\mathbf{S}_u).
\]
Here $p_{\Slow}$ is shared across sequences, whereas $p_{\Fast_u}$ is sequence-local.
In the implementation, the slow prediction mapping consists of a base prediction mapping and a slow prediction adapter:
\[
p_{\Slow}(\mathbf{S}_u,\chi_u)
=
p_{\Slow}^{\mathrm{base}}(\mathbf{S}_u,\chi_u)
+
p_{\Slow}^{\mathrm{adapt}}(\mathbf{S}_u).
\]
The base prediction mapping $p_{\Slow}^{\mathrm{base}}$ uses
$\mathrm{LN}\to\mathrm{Linear}(d,d)\to
\mathrm{GELU}\to\mathrm{Linear}(d,d)$.
Its input incorporates the stage, parameter group, and prediction horizon from the runtime conditions,
and its scaled output forms the base prediction increment.
The parameters of this mapping belong to the trainable slow parameters $\Slow$.

The slow prediction adapter $p_{\Slow}^{\mathrm{adapt}}$ and fast prediction mapping
$p_{\Fast_u}$ both use the bottleneck architecture
\[
\mathrm{LN}\to\mathrm{Linear}(d,r)\to
\mathrm{GELU}\to\mathrm{Linear}(r,d),
\]
where $r$ is the adapter bottleneck dimension.
These mappings operate on each node separately and share parameters across nodes; the mappings themselves perform no additional neighborhood aggregation along the node axis.
The slow prediction adapter uses slow parameters shared across sequences; the fast prediction mapping uses the fast parameters of the current sequence.
The complete state prediction is therefore obtained by adding the current state, base prediction increment, slow prediction adaptation increment, and fast prediction adaptation increment.

The second stage estimates the residual from the state predicted in the first stage:
\[
\widehat{\Delta}_{\text{self},u+1}^{(2)}
= c_{\Slow}(\widehat{\mathbf{S}}_{u+1}^{\text{pre}})
+ c_{\Fast_u}(\widehat{\mathbf{S}}_{u+1}^{\text{pre}}).
\]
Here both $c_{\Slow}$ and $c_{\Fast_u}$ take the predicted state $\widehat{\mathbf{S}}_{u+1}^{\mathrm{pre}}$ as input and produce correction components using the slow and fast parameters, respectively. Their sum forms the residual estimate from direct measurement.
Both mappings operate on each node separately, share parameters across nodes, and use
\[
\mathrm{LN}\to\mathrm{Linear}(d,r)\to
\mathrm{GELU}\to\mathrm{Linear}(r,d),
\]
producing a $d$-dimensional residual estimate at each node.
The mapping $c_{\Slow}$ uses shared slow parameters, whereas $c_{\Fast_u}$ uses sequence-local fast parameters.
Residual prediction does not directly modify the actual successor state $\mathbf{S}_{u+1}^{\mathrm{pre}}$. Once its output passes the direct-measurement availability check, it is used for state correction under the strength and amplitude constraints described above.

For observation $i$, let $\mathbf{S}_{\text{pred},i}$ denote its 1-step simulation and let its $K=2$ residual be $\Delta_{\text{self},i}^{(2)}$. Let $\mathcal{D}_{K=2}$ be the set of evolution steps at which $K=2$ simulation-based measurements have been executed. By the definition of $\mathcal{D}_{\text{self}}$, $\mathcal{D}_{K=2}\subseteq\mathcal{D}_{\text{self}}$. The loss for supervising the direct-measurement parameters is
\begin{equation}
    \mathcal{L}_{\text{direct-self}} = \mathbb{E}_{i\in\mathcal{D}_{K=2}} \frac{1}{2}\left[\operatorname{RelMSE}(\widehat{\mathbf{S}}_i^{\text{pre}}, \mathbf{S}_{\text{pred},i}) + \operatorname{RelMSE}(\widehat{\Delta}_{\text{self},i}^{(2)}, \Delta_{\text{self},i}^{(2)})\right]
\label{eq:l-directself}
\end{equation}
Only $K=2$ simulation-based measurements that are already available are used as supervision signals; residuals with $K=Q+1$ do not participate in direct-measurement parameter training.

The availability of direct-measurement parameters is determined by validation against previously recorded simulation-based measurements. The checks determine whether the invocation stage, parameter-group index, fast-parameter version, and simulation depth match the current request; whether the record remains within its validity window; and whether the estimation error is below a relative threshold. If any condition is not met, the direct measurement is marked unavailable, and the current correction uses a simulation-based residual or is skipped. Direct measurement with $K=2$ and simulation results with $K=Q+1$ belong to different measurement tiers; the $K=Q+1$ residual is obtained through the corresponding simulation-based measurement.

If the controller requests an update to the fast parameters $\Fast_u$ in the current call, this update changes the outputs of $p_{\Fast_u}$ and $c_{\Fast_u}$. Therefore, whenever direct measurement is selected, both prediction stages must be rerun after the $\Fast_u$ update is complete, followed by validation according to the rules above.
The simulation-based residual is independent of fast-parameter updates and retains its original value.

We emphasize that the amplitude of the direct-measurement parameter output and the amplitude of the actual residual are independent. Direct measurement only approximates the current residual, and its reliability is ensured by validation against simulation-based measurements. Regardless of whether the current call performs state correction, simulation-based measurement participates in internal-consistency learning and provides supervision for subsequent direct measurement; it is therefore indispensable to the complete training procedure.

The predicted state $\widehat{\mathbf{S}}_{u+1}^{\text{pre}}$ and actual state $\mathbf{S}_{u+1}^{\text{pre}}$ may differ: direct measurement predicts under the runtime conditions before the action, whereas actual state evolution produces the state under the runtime conditions after the action.

The self-measurement mechanism operates within a conditional-convergence framework.
Only when the complete parameter-group cycle $\mathcal{C}_{q+1} = \Phi_{q+1}^{(Q)}$, which depends on the slow parameters $\Slow$ and runtime conditions $\chi$, is a contraction on the closed subset $\mathcal{X}$,
\begin{equation*}
 \|\mathcal{C}_{q+1}(A) - \mathcal{C}_{q+1}(B)\|_F \leq L\|A-B\|_F,\quad L<1,
\end{equation*}
does the Banach fixed-point theorem guarantee a unique fixed point $\mathbf{S}^\star$ and
\begin{equation*}
 \|\mathcal{C}_{q+1}^m(\mathbf{S}) - \mathbf{S}^\star\|_F \leq L^m \|\mathbf{S} - \mathbf{S}^\star\|_F.
\end{equation*}
This result is conditional. The convergence of a trained model is determined by its specific parameters and the corresponding computational dynamics.

\section{BriLLM-MetaPred Operational Architecture}
\label{sec:metapred-framework}

This section describes the external interface, readout, and decision layers in MetaPred's six-layer structure. These three layers modify or read out the two basic processes: the external interface layer provides the triggering conditions for main state evolution, the readout layer modifies the readout of main state evolution, and the decision layer modifies the scheduling of the two basic processes.

\subsection{External Interface Layer: Token Input and State Writing}
External tokens are written into the dynamical state in three steps:
\begin{enumerate}
    \item \textbf{Projection lookup}: for an input token sequence $x_{1:T}$ ($x_t \in \mathcal{V}$), the shared input projection matrix $E_{\text{in}} \in \mathbb{R}^{|\mathcal{V}|\times d}$ gives the token projection vector $e_t = E_{\text{in}}[x_t]$. $E_{\text{in}}$ is randomly initialized using Xavier uniform initialization; the input projection matrix is a subset of the slow parameters $\Slow$.
    \item \textbf{Sliding-window update}: the input region $I$ maintains an ordered window of capacity $|I|$. When a new token projection vector arrives, the oldest projection vector is discarded from the left end and the new projection vector is appended to the right:
    \[
    H_t = (e_{t-|I|+1}, \dots, e_{t-1}, e_t)
    \]
    When $t<|I|$, the left end is padded with zero vectors; information from earlier tokens beyond the window capacity is compressed into the working region $W$ and modal region $M$ through dynamical evolution and retained as long-term memory.
        \item \textbf{State writing}: the window contents overwrite the input-region state slot by slot, with two learnable slot projection vectors $s_k$ and $p_k$ added:
    \[
    (\mathbf{S}^{\text{write}}_t)_{i_k} = e_{t-|I|+k} + s_k + p_k, \quad k=1,\dots,|I|
    \]
    Here $i_k$ is the index of the $k$-th node in the input region; $s_k$ and $p_k$ correspond to the slot embedding table and position embedding table in the implementation, respectively. Both are learnable parameters initialized from the normal distribution $\mathcal{N}(0, 0.02^2)$.
\end{enumerate}

\textbf{Evolution triggering rules}: state writing advances only the token step $t$, leaving the evolution step $u$ unchanged; after writing, $K_{\text{in}}$ steps of encoding evolution $\Phi_q^{\text{step}}$ are triggered. After the entire context has been written, $K_{\text{ev}}$ steps of autonomous evolution are triggered, with no new token injected. During generation, 1 step of generation evolution is triggered after each target token is read. Encoding evolution and generation evolution are both evolution with input, while autonomous evolution is evolution without input. The initial state of every sequence is a zero tensor, and the state is fully reset at sequence boundaries.

\subsection{Readout Layer: Prediction Readout}
The readout layer reads predictions directly from the dynamical state of the state network; the readout mapping is implemented by the output projection.

First, the $\operatorname{MeanPool}$ operator extracts pooled features from 5 node subsets:
    \begin{align}
     o = \operatorname{MeanPool}(O; \mathbf{S}),\quad i = \operatorname{MeanPool}(I; \mathbf{S}),\quad c = \operatorname{MeanPool}(C; \mathbf{S}),\\  \quad
     f = \operatorname{MeanPool}(F; \mathbf{S}),\quad m = \operatorname{MeanPool}(M; \mathbf{S}),
    \label{eq:pools}
    \end{align}
    Here $F = I\cup W\cup O$ is the auxiliary readout region.
    
Next, the complete readout feature combines the base path, structured residual, and slot alignment, with the semantic residual incorporated into the base path. All components come from the same dynamical state. They are constructed as follows:
    \begin{itemize}
        \item \textbf{Semantic residual}: the original token projection vectors in the input window $H_t = (e_{t-|I|+1},\ldots,e_t)$ provide a direct differentiable training path to $E_{\text{in}}$:
        \begin{align}
        q_{\rm sem}&=[o;i;c;e_t;\operatorname{mean}(e_{t-3:t});\operatorname{mean}(H_t)],
        &u_{\rm sem}&=F_{\rm sem}(\operatorname{LN}(q_{\rm sem})),\nonumber\\
        \delta_{\rm sem}&=\operatorname{Bounded}\left(u_{\rm sem}, u_{\rm sem}, \sigma(\gamma_{\rm sem}) c_{\rm read} c_{\rm sem}^{\max}\right).
        \label{eq:sem-clip}
        \end{align}
        Here $c_{\text{read}}$ is a scalar hyperparameter controlling the magnitude of the semantic residual. Together with $c_{\text{sem}}^{\max}$, it limits the residual scale to avoid interfering with the base readout path. $F_{\rm sem}$ is a two-layer fully connected mapping with input dimension $5d+d=6d$, hidden dimension $2d$, and output dimension $d$. The hidden layer uses GELU activation.
        
        \item \textbf{Base path}: pooled features are combined into the base readout feature $r_{\text{base}}$, with a gating mechanism dynamically weighting features from different sources:
        \[
r_{\text{base}} = \operatorname{LN}\left(r_0 + g_{fs} \odot W_f f + (1-g_{fs}) \odot W_c c + \delta_{\text{sem}}\right)
        \]
Here $r_0 = W_O o + b_O + g_I \odot (W_i i + b_i)$, and $g_I, g_{fs}$ are gating weights produced by the $\operatorname{Gate}$ operator, where
\[
g_I = \operatorname{Gate}([o;i;c]; W_I, b_I),\qquad
g_{fs} = \operatorname{Gate}([f;c;r_0]; W_{fs}, b_{fs}),
\]
$\operatorname{Gate}$ is defined in Eq.~(\ref{eq:gate-map}).
        
\item \textbf{Structured residual}:
      The historical observation features $\mu_u$ are provided by the sequence runtime and are distinct from the compatible measurement history $\mathcal{Q}_u$ in the controller input. Here $e_u^{\text{pred}}$ and $e_u^{\text{corr}}$ are the prediction error and correction error of the most recent matured observation, respectively:
\begin{equation*}
\mu_u=[e_u^{\text{pred}},e_u^{\text{corr}},\operatorname{RMS}(v_u),\operatorname{RMS}(a_u),
\|\Delta\Fast_u\|,\|\delta\mathbf{S}_u\|_{\operatorname{RMS}},
B_u^{m}/B_0^{m},B_u^{w}/B_0^{w}]\in\mathbb{R}^8.
\end{equation*}
        Learned output-node weights $a_m$, satisfying $\sum_l a_{m,l}=0$ and $\|a_m\|_2=1$, are combined with the historical observation features $\mu_u$ to supply a structured offset:
        \begin{align}
        z_m&=\sum_l a_{m,l}O_l,\nonumber\\
        \omega&=\operatorname{softmax}(W_\omega[o;\mathbf{S}_E;m;\mu_u]+b_\omega),\nonumber\\
        u_{\rm str}&=W_\uparrow\sum_s\omega_s\operatorname{GELU}(W_\downarrow\operatorname{LN}(q_s)),\nonumber\\
        \delta_{\rm str}&=\operatorname{Bounded}\left(u_{\rm str}, r_{\text{base}}, \sigma(\gamma_{\rm str}) c_{\rm read} \kappa_{\rm str}\right),
        \label{eq:str-clip}
        \end{align}
        Here $q_s\in\{z_1,\ldots,z_R,\mathbf{S}_E,m\}$. The vectors $z_1,\ldots,z_R$ are pooled subset features from the output region $O$ and auxiliary region $F$, with $R=4$, corresponding to linear projections of the four pooled vectors $O$, $I$, $F$, and $M$, respectively; they provide multiple reference directions for the structured residual.
        $a_m$ is derived from an unconstrained parameter $\tilde a_m\in\mathbb{R}^{|O|}$ by setting $a_m=\tilde a_m-\operatorname{mean}(\tilde a_m)$ and then dividing by $\|\tilde a_m-\operatorname{mean}(\tilde a_m)\|_2$.
        
        \item \textbf{Slot alignment}: for generation position $k_j = j-1$, an offset between the output node and pooled feature is added:
        \[
        \delta_{\text{slot},k_j} = \operatorname{Bounded}\left(\sigma(\gamma_{\text{slot}})(W_{\text{slot}}\operatorname{LN}(O_l - o) + b_{\text{slot}}), r_{\text{base}}+\delta_{\text{str}}, \kappa_{\text{slot}}\right)
        \]
        Here $O_l$ is the feature of the $l = k_j \bmod |O|$-th node in the output region.
    \end{itemize}
    
The final logits and probabilities are obtained from the base feature plus the two residuals:
    \begin{equation}
     r_j = r_{\text{base}} + \delta_{\text{str}} + \delta_{\text{slot},k_j}, \quad \ell_j = W_{\text{out}} r_j + b_{\text{out}}, \quad p_j = \operatorname{softmax}(\ell_j),
     \label{eq:logits}
    \end{equation}
    Here $W_{\text{out}}, b_{\text{out}}$ are independent output projection parameters belonging to the slow parameters $\Slow$.

\subsection{Decision Layer: The Controller}

The MetaPred model must contain two independent neural networks: a state network implemented by the SiFu graph, and a controller network that performs meta-state control. The state network is organized into the five node partitions defined above. Nodes communicate through node-axis mixing operators, while channel transformations process information within each node. Its parameters are the slow parameters $\Slow$ and fast parameters $\Fast_u$. The state network interacts with external inputs and outputs through the input projection matrix $E_{\text{in}}$ and output projection matrix $W_{\text{out}}$.

The controller network is a feedforward policy-value network whose parameters are independent of those of the state network, with an actor-critic architecture. It receives the causal state summary $o_u$ and processes it through learnable layer normalization and two fully connected hidden layers to obtain a shared representation of the same dimension:
\[
z_u=\tanh\!\left(W_2\tanh\!\left(W_1\operatorname{LN}(o_u)+b_1\right)+b_2\right),
\]
Here $\operatorname{LN}$ is learnable layer normalization. The shared representation $z_u$ feeds six action heads and one scalar value head. The value head outputs the scalar estimate $V_{\boldsymbol{\psi}}(o_u)$ for PPO training.

The controller network is called once before the start of each executed evolution step. It reads the causal state summary $o_u$ from the state tensor and auxiliary metadata, then outputs a set of measurement and update actions $a_u$ that configure the runtime conditions $\chi_u$ for subsequent evolution. The actions configure only the runtime conditions $\chi_u$; the state tensor and evolution-step counter remain unchanged. The controller schedules the self-measurement subprocess through the measurement actions $a_u^{\text{tier}}$, $a_u^{\text{focus}}$, and $a_u^{\text{corr}}$: $a_u^{\text{tier}}$ determines whether direct measurement or simulation-based measurement is performed and sets the simulation depth; $a_u^{\text{focus}}$ specifies the measurement object to focus on; and $a_u^{\text{corr}}$ determines the state-correction strength. Thus, the controller manages execution of the second-order recursive dynamical subprocess through a unified mechanism rather than a fixed procedure.
The timing and triggering conditions for updates to the control parameters $\boldsymbol{\psi}$ are handled by the PPO step specified in the model's training algorithm.

The controller's input and output data flow is as follows:

The controller reads the causal state summary $o_u$, which contains only information available from executed steps:
\[
o_u = \left[s_c(\mathbf{S}_u);\, r_u;\, \mathcal{Q}_u;\, \tau_u;\, \zeta_u;\, a_{u-1};\, \mathrm{stage}_u;\, b_u;\, d_u\right]
\]
where
\begin{itemize}
\item[] $s_c(\mathbf{S}_u)$: the control state summary, containing the global mean, RMS, RMS after centering along the node axis, the RMS values of input/output/internal/self-coordination nodes, and the proportion of finite values;
\item[] $r_u$: recent runtime signals, containing prediction error, intrinsic energy, the RMS values of velocity and acceleration, the norms of the fast-parameter update and state correction, the previous reward, and the parameter version;
\item[] $\mathcal{Q}_u$: compatible measurement history, containing difference statistics of existing compatible measurements and the corresponding validity information;
\item[] $\tau_u$: the normalized number of steps since the last measurement for each of the nine classes of measurement objects;
\item[] $\zeta_u$: whether each of the nine classes of measurement objects has no record yet;
\item[] $a_{u-1}$: the indices of the six previous action components;
\item[] $\mathrm{stage}_u$: a one-hot encoding of the invocation stage (encoding, autonomous, generation);
\item[] $b_u$: the remaining fractions of the measurement budget and fast-parameter update budget;
\item[] $d_u$: runtime diagnostics including phase, sample-pool and prediction-queue occupancy, first- and second-order energy differences and their validity, self-measurement residual and loss, and calibration status.
\end{itemize}
Gradients are stopped through all components fed into the controller.
    
The controller outputs an action $a_u$, sampled from six discrete action heads and written as
\[    a_u=\bigl(a_u^{\text{tier}},a_u^{\text{focus}},a_u^{\text{upd}},a_u^{\text{str}},a_u^{\text{corr}},a_u^{\text{ratio}}\bigr),
\]
where $a_u^{\text{tier}}$ is the measurement tier, $a_u^{\text{focus}}$ is the measurement object, $a_u^{\text{upd}}$ is the fast-parameter update switch, $a_u^{\text{str}}$ is the fast-parameter update strength, $a_u^{\text{corr}}$ is the state-correction strength, and $a_u^{\text{ratio}}$ is the sampling ratio of matured observations used to update fast parameters. All components use discrete tiers. Each of the six action heads produces a categorical distribution; invalid categories are masked before sampling from the distributions to form action $a_u$. The following conditions further override the sampled results with default actions: insufficient coverage, warm-up, budget exhaustion, uncalibrated direct-measurement parameters, and simulation-based measurement intervals. These override rules are hard implementation constraints and are not learned through the control parameters. When the update switch is off, the sampling-ratio action head index is set to zero; requesting an update and executing it are two different steps.
    
The controller is called in all three operating modes (see Table~\ref{tab:run-modes}), but only \textsc{Train} and \textsc{Adaptive} permit fast-parameter updates and state corrections. In \textsc{Frozen} mode, the controller still computes actions, whose effects are limited to configuring runtime conditions. The control parameters $\boldsymbol{\psi}$ are frozen in all evaluation modes.

\section{MetaPred Training Algorithm}
\label{sec:training-inference}

This section describes the update layer in MetaPred's six-layer structure, namely the learned refinement of the self-measurement process. The three parameter families are updated separately according to their time scales: slow parameters $\Slow$ are shared across sequences and updated by batch, fast parameters $\Fast_u$ are local to each sequence and updated by events within the sequence, and control parameters $\boldsymbol{\psi}$ are shared across sequences and updated from policy trajectories.

MetaPred organizes training signals differently from layer-by-layer backpropagation in the conventional neural-network sense. The latter implicitly assumes three conditions: the forward and backward computations share the same layer structure, the loss is a single terminal function, and the backward computation within a single weight-update cycle proceeds layer by layer in layer order along a unique path. MetaPred uses a different organization for all three conditions. Gradient computation itself still uses standard automatic differentiation; the differences lie in the sources of training signals and the organization of parameters.

First, the SiFu graph has no layer hierarchy, and its parameters are partitioned by functional role.

Second, training signals consist of multiple self-comparisons of the same state from different perspectives: $\mathcal{L}_{\text{pred}}$ compares the state with its successor, $\mathcal{L}_{\text{fp}}$ compares the state with a one-step replay, $\mathcal{L}_{\text{self}}$ compares a 1-step simulation with a $K$-step simulation, and $\mathcal{L}_{\text{direct-self}}$ compares predicted and simulated residuals.

Third, the training weight updates for the main component of the MetaPred model, the state network, are completed synchronously in a single operation within each update cycle: automatic differentiation computes the gradients over the state tensor in a single computation, after which the parameters grouped by function are updated synchronously in a single operation. Unlike conventional hierarchical neural networks, there is no backpropagation path that follows the layer order; instead, backward paths from multiple observation points converge on the same graph without a layer hierarchy, rather than propagating backward layer by layer.

This organization is a necessary choice under the mathematical theory of AGI. The single-state constraint requires a unified encoder--decoder path so that representation and prediction share their implementation mechanism and parameters; the endogenous-supervision constraint requires losses to come from comparisons of the state with itself; the training--inference coupling constraint requires the training path to remain present during inference; and the functional heterogeneity of parameters requires general dynamics, sequence-local adaptation, and resource-allocation policies to be updated separately on their respective time scales.
Together, these four constraints leave only one workable organization: multiple self-comparisons on a single state, parameter updates grouped by function, and a shared state-evolution structure for training and inference. This chain of reasoning is coherent: the six-layer decomposition of the architecture is determined by the functional requirements of the MetaPred primitive, the three-way separation of the training algorithm is determined by the different parameter time scales, and the two designs are unified through the same dynamical structure. The training and inference procedures described below are both implemented using the standard dynamics routine.

\subsection{Multi-task Training Objective}
The training objective consists of an external task loss and endogenous self-supervised losses, with all endogenous self-supervised losses using the $\operatorname{RelMSE}$ operator:
\begin{equation}
 \boxed{
 \mathcal{L}_{\text{outer}} = \mathcal{L}_{\text{task}} + \lambda_{\text{int}}\left(\lambda_p\mathcal{L}_{\text{pred}} + \lambda_f\mathcal{L}_{\text{fp}} + \lambda_d\mathcal{L}_{\text{dyn}} + \lambda_c\mathcal{L}_{\text{direct-self}}\right) + \lambda_s\mathcal{L}_{\text{self}}
 }
 \label{eq:outer-loss}
\end{equation}
The per-observation state prediction loss $\mathcal{L}_{\text{pred},i}$ is defined below under the state prediction loss term; because the target $\mathbf{S}_i^{\rm pre}$ is detached, gradients of this loss flow only through the state predictors $p_{\Slow}, p_{\Fast_u}$.
Let $\mathcal{D}_{\text{pred}}$ denote the set of mature state-prediction observations, $\mathcal{D}_{\text{fp}}$ the set of observations whose pre-correction states have been reached, and $\mathcal{D}_{\text{dyn}}$ the set of observations with available dynamical trends. The definitions and purposes of the loss terms are as follows:
\begin{itemize}
    \item \textbf{External task loss}: $\mathcal{L}_{\text{task}} = -\frac{1}{BH}\sum_{b=1}^B\sum_{j=1}^H \log p_{b,j}[y_{b,j}]$ is the next-token cross-entropy, available only when external labels are provided.
    
    \item \textbf{State prediction loss}: the per-observation loss is
\[
\mathcal{L}_{\text{pred},i} = \operatorname{RelMSE}(\widehat{\mathbf{S}}_i^{\text{pre}}, \mathbf{S}_i^{\text{pre}}),
\]
and the overall loss is
\[
\mathcal{L}_{\text{pred}} = \mathbb{E}_{i\in\mathcal{D}_{\text{pred}}} \mathcal{L}_{\text{pred},i},
\]
which monitors the accuracy of the state predictors in predicting the actual one-step successor. The target $\mathbf{S}_i^{\text{pre}}$ is detached.
    
    \item \textbf{Fixed-point proxy loss}:
    \begin{equation}
     \mathcal{L}_{\text{fp}} = \frac{1}{|\mathcal{D}_{\text{fp}}|}\sum_{i\in\mathcal{D}_{\text{fp}}} \operatorname{RelMSE}\left(\Phi_{q_i}^{\text{step}}(\mathbf{S}_i^{\text{pre}}; \chi_i), \mathbf{S}_i^{\text{pre}}\right),
     \label{eq:fp-proxy}
    \end{equation}
    A single ordinary evolution call is replayed using the stored parameter-group index $q_i$ and historical conditions $\chi_i$ to test whether the pre-correction state actually reached remains unchanged after one evolution step; this does not advance the actual evolution-step counter. This probe differs from simulation-based measurement in that it acts on the pre-correction state \textbf{already reached}, whereas both paths of simulation-based measurement start from the same \textbf{current} state.
    
    \item \textbf{Dynamical trend loss}: for each mature observation $i$, define
    \[
    \mathcal{L}_{\text{dyn},i} = \frac{w_s\mathcal{L}_{\text{state},i} + I_i^m w_m\mathcal{L}_{\text{measurement},i} + I_i^e w_e\mathcal{L}_{\text{energy},i}}{w_s + I_i^m w_m + I_i^e w_e},
    \]
    and set $\mathcal{L}_{\text{dyn}}=\mathbb{E}_{i\in\mathcal{D}_{\text{dyn}}}\mathcal{L}_{\text{dyn},i}$ to supervise the consistency of state trends, measurement-statistic trends, and energy trends, using only compatible predecessors of mature observations for the computation. Here, $I_i^m,I_i^e\in\{0,1\}$ indicate whether the measurement trend and energy trend, respectively, are available for observation $i$; $\mathcal{L}_{\text{state}}, \mathcal{L}_{\text{measurement}}, \mathcal{L}_{\text{energy}}$ are the RelMSE losses for the state trend, measurement-statistic trend, and energy trend, respectively; and $\mathcal{D}_{\text{dyn}}$ is the set of observations with available dynamical trends.

    Let $s(\mathbf{S})$ be the state summary, and let $u_i^0,u_i^1$ be the evolution-step indices at which observation $i$ is issued and its target is reached, with $\Delta u_i=u_i^1-u_i^0\geq1$. Define the order-$h$ trend as
\begin{equation}
v_i^{(h)}=\frac{s^{(h)}(\mathbf{S}_i^{\text{pre}})-s^{(h)}(\mathbf{S}_i)}{\Delta u_i},\qquad h\in\{1,2\},
\end{equation}
where $s^{(1)}=s$, and $s^{(2)}$ is the finite difference of the first-order trend along evolution steps, namely the second-order trend (acceleration).
The state trend head $T_{\Slow}$ takes as input the state summary $s(\mathbf{S}_i)\in\mathbb{R}^{d_h}$ at the time the observation is issued, and outputs both a first-order trend correction and a second-order trend prediction:
\[
\left[b_i^{(1)},\widehat v_i^{(2)}\right]
= T_{\Slow}\!\left(s(\mathbf{S}_i)\right) \in\mathbb{R}^{2d_h},
\]
where both output components belong to $\mathbb{R}^{d_h}$.
Let $h_T$ be the hidden dimension of this trend head, with $h_T\ge 2d_h$. Its architecture is
\[
\mathrm{LN}\to\mathrm{Linear}(d_h,h_T)\to
\mathrm{GELU}\to\mathrm{Linear}(h_T,2d_h).
\]
The first-order trend prediction is obtained by adding the correction from the trend head to the rate of change implied by the state prediction:
\[
\widehat v_i^{(1)}
=
\frac{s(\widehat{\mathbf{S}}_i^{\mathrm{pre}}) - s(\mathbf{S}_i)}{\Delta u_i}
+ b_i^{(1)}.
\]
The second-order trend prediction $\widehat v_i^{(2)}$ is produced directly by the trend head.
The per-observation state trend loss is
\begin{equation}
\mathcal{L}_{\text{state},i}=\frac{\sum_{h\in\mathcal{A}_i^v}\operatorname{RelMSE}(\widehat v_i^{(h)},v_i^{(h)})}{|\mathcal{A}_i^v|},
\end{equation}
where $\mathcal{A}_i^v\subseteq\{1,2\}$ is the set of available orders; $\widehat v_i^{(1)}$ is the combined prediction above, and $\widehat v_i^{(2)}$ is the second-order prediction from the trend head.
The targets for all orders are detached, and the supervised loss is computed only for valid mature observations.

The input to the measurement trend head concatenates the $d_h$-dimensional state summary with a one-hot encoding of the nine classes of measurement objects, giving an input dimension of $d_h+9$; its hidden dimension is denoted by $h_m$, and its output dimension is $2c_m$, where $c_m$ is the dimension of the measurement-statistic vector.
Its output is split into two $c_m$-dimensional vectors that predict the first- and second-order trends of the measurement statistics, respectively.
The energy trend head has input dimension $d_h$, hidden dimension $h_e$, and output dimension $2$, predicting the first- and second-order trends of scalar energy, respectively.
Both trend heads use layer normalization, two linear layers, and an intermediate GELU activation, and compute relative mean-squared error only for valid orders.
 
The weights are set to $(w_s,w_m,w_e)=(1.0,0.01,0.1)$. The three sub-losses are constructed using the $\operatorname{MeasVec}$ and $\operatorname{Trend}$ operators.
    
    \item \textbf{Direct-measurement parameter loss}: $\mathcal{L}_{\text{direct-self}}$ is defined in Eq.(\ref{eq:l-directself}) and calibrates the accuracy of the direct-measurement parameters for 1-step state prediction and 2-step residual prediction.
    
    \item \textbf{Self-measurement loss}: $\mathcal{L}_{\text{self}}$ is defined in Eq.(\ref{eq:l-self}) and provides an endogenous fixed-point self-supervision signal.
\end{itemize}
The loss weights are configured as follows: $\lambda_{\text{int}}=0.01$, $(\lambda_p,\lambda_f,\lambda_d)=(1,0.1,0.5)$, $\lambda_c=1$, and $\lambda_s=0.1$.

The preceding loss terms share the same per-observation record. The transition observation generated by each executed evolution step is denoted by
\begin{equation*}
\mathcal{R}_i=(\mathbf{S}_i,\mathbf{S}_i^{\text{pre}},s(\mathbf{S}_i),v_i,a_i,\mathrm{stage}_i,q_i,\nu_i,\mathcal{K}_i,
\mathbf{S}_{\text{pred},i},\Delta_{\text{self},i}^{(2)}),
\end{equation*}
where $s(\mathbf{S}_i)$ is the state summary defined in Eq.(\ref{eq:state-summary}), $v_i$ is the trend, $a_i$ is the controller action, $\mathrm{stage}_i$ is the call stage, $q_i$ is the parameter-group index, $\nu_i$ is the fast-parameter version, and $\mathcal{K}_i$ contains clipped inputs and historical observations;
the last two fields are present only for simulation-based measurements with $K=2$. This record is the common input to direct-measurement parameter calibration, slow-update preview, and PPO policy updates.

\subsection{Separate Update Rules}

The learnable parameters are divided into three families according to their time scales: slow parameters $\Slow$ are the state network's parameters shared across sequences, fast parameters $\Fast_u$ are its sequence-local parameters, and control parameters $\boldsymbol{\psi}$ are the controller-network parameters. The three families have fundamentally different time scales: $\Slow$ is updated by batch, $\Fast_u$ by events within a sequence, and $\boldsymbol{\psi}$ by policy trajectory. This heterogeneity means that the update mechanisms cannot be unified and must remain separate. If the gradient paths of the three parameter families interfere with one another, the endogenous self-supervision signal will damage general representations or allow the controller to exploit self-measurement to game the objective. This separation is a grouping at the optimization level: MetaPred's state $\mathbf{S}_{t,u}$ remains a single global tensor, and all parameters within $\Slow$ are updated synchronously in one update, corresponding to the dynamical weights on the SiFu graph without a layer hierarchy.

The three families differ in their sharing scope: $\Slow$ and $\boldsymbol{\psi}$ are shared across sequences, while $\Fast_u$ is sequence-local. Their scope and update mechanisms are shown in Table~\ref{tab:param-groups}.

\begin{table}[htp]
\caption{Assignment and update mechanisms of the three parameter families}
\label{tab:param-groups}
\centering\small
\setlength{\tabcolsep}{4pt}
\begin{tabular}{>{\raggedright\arraybackslash}p{0.15\textwidth}>{\raggedright\arraybackslash}p{0.39\textwidth}>{\raggedright\arraybackslash}p{0.21\textwidth}>{\raggedright\arraybackslash}p{0.17\textwidth}}
\toprule
Parameter family & Scope & Update source & Trigger condition \\
\midrule
Slow parameters $\Slow$ & Input projection matrix $E_{\text{in}}$, output projection matrices $W_{\text{out}}, b_{\text{out}}$; slot projection vectors $\{s_k, p_k\}$; modal reference vectors $\{\mathbf m_k\}$; all weights of the five stages of one-step evolution; readout-path weights; slow prediction parameters $p_{\Slow}$ and slow correction parameters $c_{\Slow}$ & Direct gradients of $\mathcal{L}_{\text{outer}}$ (AdamW) & Slow-parameter update boundary $s$ \\
Fast parameters $\Fast_u$ & Fast prediction parameters $p_{\Fast_u}$ and fast correction parameters $c_{\Fast_u}$ & Direct gradients of $\mathcal{L}_{\text{inner}}$ (clipped SGD) & Each controller request within a sequence \\
Control parameters $\boldsymbol{\psi}$ & All controller-network parameters, including the shared encoder, six action heads, and value head & PPO policy gradients (independent Adam optimizer) & When the policy trajectory matures \\
\bottomrule
\end{tabular}
\end{table}

The update sources of the three families in the table fall into two categories.
The first is direct gradients: $\Slow$ uses $\mathcal{L}_{\text{outer}}$, and $\Fast_u$ uses $\mathcal{L}_{\text{inner}}$, both taking gradients of fixed loss functions.
The second is policy gradients: $\boldsymbol{\psi}$ is updated by PPO, whose objective $\mathcal{L}_{\text{PPO}}$ depends on sampled trajectories and delayed advantage estimates, and therefore uses policy gradients.
These two sources determine the differences in update frequency and optimizer design.

\begin{itemize}
    \item \textbf{Slow-parameter $\Slow$ update}: the AdamW optimizer is applied at slow-parameter update boundary $s$:
    \[
    \bar{g}_s^{\Slow} = \operatorname{ClipNorm}(g_s^{\Slow}, c_{\Slow}), \quad \Slow_{s+1} = \Slow_s + a_s \cdot \text{AdamW}(\Slow_s, \bar{g}_s^{\Slow}; \alpha_s, \beta_1, \beta_2, \lambda_w)
    \]
    where $g_s^{\Slow} = \nabla_{\Slow_s} \overline{\mathcal{L}}_s$ is the accumulated batch gradient, $a_s$ is the candidate scaling factor (1 for ordinary steps and 0.25/0.5/1 for preview steps), and $c_{\Slow}$ is the global gradient-norm limit.
    
    An optional \textbf{slow-update preview mechanism} can be enabled: before the update, candidate parameters are constructed with candidate scaling factors $a\in\{1,0.5,0.25\}$, and the joint objective $J_a$ is computed by replaying the same training segment. Among the candidates satisfying $J_a \leq J_0 + \tau(J_0)$, where $\tau(z)=\epsilon_{\text{abs}}+\epsilon_{\text{rel}}|z|$ is the tolerance, the candidate with the smallest $J_a$ is selected; if no candidate qualifies, the original parameters are retained.

    Slow parameters are updated synchronously at batch boundaries because $\Slow$ covers the entire SiFu graph and optimization operates on the global parameter group; there is no layer hierarchy along which to backpropagate layer by layer.
    
    \item \textbf{Fast-parameter $\Fast_u$ update}: clipped SGD is applied when the controller enables a fast-parameter update:
    \begin{equation}
     g_u^{\Fast} = \nabla_{\Fast_u} \mathcal{L}_{\text{inner}}, \quad \Fast_u^+ = \Fast_u - \alpha_{\Fast} a_u^{\text{str}} \operatorname{ClipNorm}(g_u^{\Fast}, c_{\Fast}),
     \label{eq:fast-update}
    \end{equation}
    Let $\mathcal{M}_u$ denote the set of controller-selected observations that matured before evolution step $u$:
    \begin{equation}
     \mathcal{L}_{\text{inner}} = \mathbb{E}_{i\in\mathcal{M}_u} \left[\lambda_p\mathcal{L}_{\text{pred},i} + \lambda_d^{\text{fast}}\mathcal{L}_{\text{state},i} + \lambda_c\mathcal{L}_{\text{direct-self},i}\right],
     \label{eq:inner}
    \end{equation}
    The inner update includes state prediction, state trend, and direct-measurement losses; $a_u^{\text{str}}$ is the update strength selected by the controller. Here, $\mathcal{L}_{\text{state},i}$ is the per-observation state trend loss, constructed in the same way as $\mathcal{L}_{\text{state}}$. The inner-update weight is $\lambda_d^{\text{fast}}=0.001$.

    Fast parameters use clipped SGD because their updates are restricted to the current sequence, and clipping prevents an excessively large individual update from disrupting the state trajectory.
    
    \item \textbf{Control-parameter $\boldsymbol{\psi}$ update}: PPO is applied with a delayed update once a mature policy trajectory is ready:
    \begin{itemize}
        \item[] Reward function: $R_u = \Delta E_{\text{direct}} + \Delta E_{\text{corr}} - \beta_c C_u - \beta_w\|\Delta\Fast_u\|^2 - \beta_s\|\delta\mathbf{S}_u\|_{\operatorname{RMS}}^2 - \beta_{\infty}\mathbf{1}[\mathbf{S}_{u+1}\text{ contains non-finite values}]$, where $\Delta E_{\text{direct}}$ is the improvement in the direct-measurement parameters on independent held-out observations, $\Delta E_{\text{corr}}$ is the improvement in self-consistency under fixed conditions, and $C_u$ is the accumulated cost of measurement, updates, and correction.

        Define the direct-measurement parameter energy on a held-out observation as
\begin{equation}
E_{\text{direct}}(\Fast;r)=\lambda_p\mathcal{L}_{\text{pred},r}(\Fast)+\lambda_d^{\text{fast}}\mathcal{L}_{\text{state},r}(\Fast)+\lambda_c\mathcal{L}_{\text{direct-self},r}(\Fast),
\end{equation}
where $r$ is an independent held-out observation. Then $\Delta E_{\text{direct}} = E_{\text{direct}}(\Fast_u;r_u) - E_{\text{direct}}(\Fast_u^+;r_u)$. Define fixed-condition self-consistency on the configured probe call as
\begin{equation}
F_q(X)=\operatorname{RelMSE}\left(\Phi_q^{\text{step}}(X; \chi_u^{\text{real}})_C, X_C\right),
\end{equation}
where subscript $C$ selects the sub-tensor of the coordination region $C=\{E\}\cup M$. Then $\Delta E_{\text{corr}} = F_q(\mathbf{S}_{u+1}^{\text{pre}}) - F_q(\mathbf{S}_{u+1})$. Both evaluations retain the selected parameter-group index and do not advance the actual call counter.
        \item[]  Advantage estimation: generalized advantage estimation (GAE) is used to compute the advantage function $A_u$, with target value $G_u = A_u + \bar{V}_u$.
\item[]  PPO loss: $\mathcal{L}_{\text{PPO}} = -\mathbb{E}_u\left[\min(\rho_u\widetilde{A}_u, \bar{\rho}_u\widetilde{A}_u)\right] + c_V\mathbb{E}_u(V_{\boldsymbol{\psi}}(o_u)-G_u)^2 - c_H\mathbb{E}_u\mathcal{H}(\pi_{\boldsymbol{\psi}}(\cdot|o_u))$, where $\rho_u = \exp(\log\pi_{\boldsymbol{\psi}}(a_u|o_u) - \bar{\ell}_u)$ is the policy probability ratio, $\bar{\ell}_u$ is the behavior policy's log-probability of action $a_u$, and $\bar{\rho}_u=\operatorname{ClipTo}(\rho_u,0.8,1.2)$ is the clipped probability ratio.
    \end{itemize}
    The control parameters $\boldsymbol{\psi}$ are randomly initialized and trained from scratch, independently of the other parameters.

    The controller can only be trained through reinforcement learning methods such as PPO because its objective is the long-term return of resource allocation. This return can be evaluated only several evolution steps after an action and cannot be obtained by taking direct gradients of a single-step loss. PPO updates only $\boldsymbol{\psi}$; $\Slow$ and $\Fast_u$ are updated by their respective optimizers.
\end{itemize}

\subsection{Complete Training Procedure}
The training and inference procedures both use the standard dynamics routine, with scheduling rules implementing the differences among the three operating modes. The update timing and trigger conditions for the three parameter families are listed in Table~\ref{tab:param-groups}. Algorithm~\ref{alg:training} presents the complete training procedure for BriLLM-MetaPred. This loop body is the unified internal transition subroutine in the implementation, and encoding, autonomous, and generation calls follow the same order; the first readout in the generation stage uses the current state, and subsequent readouts occur after the previously predicted token has been written and one generation evolution step has been executed.

\begin{algorithm}[t]
\caption{BriLLM-MetaPred Training Algorithm}
\label{alg:training}
\KwIn{ Training corpus, evolution-step budgets $K_{\text{in}}, K_{\text{ev}}$, slow-parameter learning rate $\alpha_\theta$, fast-parameter learning rate $\alpha_\phi$, PPO update frequency $T_{\text{ppo}}$}
\KwOut{Trained $\Slow, \Fast_u, \boldsymbol{\psi}$}
Initialize slow parameters $\Slow$, control parameters $\boldsymbol{\psi}$, and fast parameters $\Fast_u$.

\For{each training batch}{

    \For{each training token sequence $x_{1:T}$}{
        Initialize the sequence runtime $z_0 = (\mathbf{0}, \mathbf{0}, \emptyset, (B_0^m, B_0^w), 0, \xi_0)$.
        
        \For{$t=1$ to $T$}{
            (1) Write the token projection vector $e_t$ to the input window and update the state $\mathbf{S}_{t,u}$.
            
            (2) Execute $K_{\text{in}}$ encoding evolution steps $\Phi_q^{\text{step}}$ and advance evolution step $u$.
            
            (3) Store the evolution snapshot in the history buffer $\mathcal{H}_u$.
        }
    After the context has been written, execute $K_{\text{ev}}$ autonomous evolution steps $\Phi_q^{\text{step}}$ with no new token injection, and advance $u$.
    
    \For{$j=1$ to $H$ (prediction length)}{
        (1) The controller selects action $a_u$ using the pre-action state summary and mature observations.
        
        (2) If the action selects a measurement tier, perform the corresponding measurement to obtain the residual $\widetilde{\Delta}_{\text{self}}^{(K)}(u)$.
        
        (3) If fast-parameter updates are enabled, compute $\mathcal{L}_{\text{inner}}$ using only previously mature observations, and update $\Fast_u$.
        
        (4) Issue a one-step prediction of the internal state using the updated fast parameters.
        
        (5) Execute the actual dynamics $\Phi_q^{\text{step}}$, advance $u$, obtain the pre-correction successor $\mathbf{S}_{u+1}^{\text{pre}}$, and mark the prediction record as mature.
        
        (6) If the controller requests state correction, recompute the direct-measurement residual, add the correction $\delta\mathbf{S}_u$, and update $\mathbf{S}_{u+1}$.
        
        (7) Read out the predicted distribution $p_j$ from the current dynamical state and compute the external task loss $\mathcal{L}_{\text{task}}$.
        
        (8) Store the transition observation $\mathcal{R}_i$ in the history buffer $\mathcal{H}_u$ for subsequent PPO training.
        
        (9) If $j<H$, write the ground-truth token (teacher forcing) or generated token (autoregressive generation) to the input window, and advance $t$.
    }
    Accumulate the batch gradients and apply AdamW to update the slow parameters $\Slow$.
    
    \If{the accumulated PPO trajectory length reaches $T_{\text{ppo}}$}{
        Compute the advantage function using GAE and apply PPO to update the control parameters $\boldsymbol{\psi}$.
    }
    }
}
\end{algorithm}

\section{Experiments}
\label{sec:experiments}

This section evaluates the predictive performance of the implemented BriLLM-MetaPred as a complete language model, and further examines its internal mechanisms, fast-parameter adaptation, and training at a larger scale.

\subsection{Experimental Questions and Comparison Setup}
\label{sec:exp-setup}

Table~\ref{tab:exp-questions} presents the purpose of each group of experiments. The continuous-execution training regime is used for the main language-modeling comparison; the sparse-auxiliary training regime is used to analyze internal mechanisms. The main comparison scale is 13M; in addition, a 129M model at the same scale as GPT-1 is trained to evaluate training and generation at a larger scale.

\begin{table}[htbp]
\caption{Overview of the experiments.}
\label{tab:exp-questions}
\centering\small
\begin{tabular}{@{}p{0.17\linewidth}p{0.41\linewidth}p{0.34\linewidth}@{}}
\toprule
\textbf{Research question} & \textbf{Experimental setup} & \textbf{Main observation} \\
\midrule
Language-modeling performance & 13M continuous-execution training regime versus a same-size GPT baseline & Validation PPL \\
Role of internal mechanisms & 13M mechanism ablations in the sparse-auxiliary training regime & Prediction loss in Frozen mode \\
Training at scale and generation & 129M sparse-auxiliary training regime, two window lengths & Training endpoints, validation metrics, and generation examples \\
\bottomrule
\end{tabular}
\end{table}
\paragraph{Training regimes and evaluation modes.}
This section uses the same names for MetaPred's three operating modes and its training regimes. The experiments involve two training regimes: the \textbf{continuous-execution training regime} retains node states and fast parameters across segments within a sequence, while the \textbf{sparse-auxiliary training regime} periodically runs the self-measurement mechanism alongside backbone training. Evaluation is performed in \textsc{Frozen} and \textsc{Adaptive} modes, which differ only in whether fast-parameter updates and state correction are allowed. Below, ``scale'' denotes the parameter count (M denotes million), and ``length'' denotes the training window length.

\paragraph{Evaluation metrics and reporting.}
Language modeling is evaluated using the mean negative log-likelihood (NLL) on the validation set, perplexity $\mathrm{PPL}=\exp(\mathrm{NLL})$, and bits per character $\mathrm{BPC}=\mathrm{NLL}/\ln2$ for character-level tasks; lower values are better for all three metrics. Three-seed experiments use 42, 43, and 44 and report the mean and sample standard deviation across seeds. Teacher forcing first scores the prediction at the current position and then writes the reference token; fast-parameter updates in \textsc{Adaptive} mode use internal feedback.

\paragraph{Chinese data split.}
The Chinese experiments use the THUCNews-v3 dataset, split at a 90:5:5 ratio by groups of exact duplicates and conservatively identified near-duplicate articles. The training, validation, and test sets contain 728,267, 40,382, and 40,533 articles, respectively; the vocabulary contains 6,512 tokens that occur at least 5 times in the training set. Full hyperparameters and optimizer configurations are provided in Appendix~\ref{sec:implementation}.

\paragraph{English data split.}
The English experiments use BookCorpusOpen. Exact whole-book deduplication with SHA-256 retains 17,785 books from the 17,868 original documents. A split based on document-content hashes at a 98:1:1 ratio yields 17,440, 167, and 178 books in the training, validation, and test sets, respectively. After further excluding books in which the proportion of unknown BPE fragments exceeds 1\%, the three sets retain 17,332, 167, and 176 books, respectively. Both models share the official GPT-1 BPE vocabulary of 40,478 entries. Each book is independently divided into non-overlapping 512-token blocks, without concatenation across books, and any final incomplete block is discarded; the first 16 tokens of each block provide context, and the remaining 496 tokens are used for evaluation.

\paragraph{Baseline model.}
The comparison baseline, referred to as the GPT baseline, is implemented strictly according to the GPT-1 specification and scaled to the corresponding size as needed for comparison. The 13M baseline is the same size as BriLLM-MetaPred (approximately 13.2M parameters) and is used for comparisons under a controlled training budget; the 129M baseline follows the official GPT-1 parameter scale. The baseline model and BriLLM-MetaPred use the same data splits and vocabulary and differ in model architecture and training objective. All references to the ``GPT baseline'' below denote this type of comparison model.

\subsection{Language-Modeling Performance of the Complete Model}
\label{sec:exp-language-modeling}

We first examine whether BriLLM-MetaPred in the continuous-execution training regime can achieve competitive language-modeling performance under a controlled training budget. The comparison uses a GPT baseline implemented according to the GPT-1 specification at the same scale as BriLLM-MetaPred (approximately 13.2M parameters); both models share the same data splits and GPT-1 BPE vocabulary. BriLLM-MetaPred and the baseline have approximately 13.22M and 13.21M parameters, respectively, and are each trained for 788 steps using 25,014,272 training tokens. Each 512-token sequence block uses the first 16 tokens as context and the remaining 496 for evaluation; the model retains local runtime information throughout each block.

Table~\ref{tab:exp-bookcorpus}~compares two seeds for which evaluation has been completed at the training endpoint. Both cover the full validation set of 25,540 blocks and 12,667,840 training tokens; BriLLM-MetaPred uses \textsc{Adaptive} mode. The relative PPL reduction is calculated against the GPT baseline with the same seed.

\begin{table}[htbp]
\caption{Model performance comparison on the BookCorpus validation set under equal training budgets.}
\label{tab:exp-bookcorpus}
\centering\small\setlength{\tabcolsep}{3pt}\renewcommand{\arraystretch}{1.22}
\begin{tabular}{@{}>{\raggedright\arraybackslash}p{\dimexpr0.16\linewidth-6pt\relax}>{\centering\arraybackslash}p{\dimexpr0.26\linewidth-6pt\relax}>{\centering\arraybackslash}p{\dimexpr0.29\linewidth-6pt\relax}>{\centering\arraybackslash}p{\dimexpr0.25\linewidth-6pt\relax}@{}}
\toprule
\textbf{Seed} & \textbf{GPT baseline PPL $\downarrow$} & \textbf{BriLLM-MetaPred PPL $\downarrow$} & \textbf{Relative PPL reduction} \\
\midrule
43 & 278.847 & $\bm{229.451}$ & $\bm{17.71\%}$ \\
44 & 279.355 & $\bm{248.554}$ & $\bm{11.03\%}$ \\
\bottomrule
\end{tabular}
\end{table}

PPL decreases by approximately 11\%–18\% across the two seeds, indicating that the continuous-execution training regime achieves good language-modeling performance under matched training budgets. This comparison includes the combined effects of model structure, internal auxiliary objectives, and execution. The two models use different training backends, and their actual computational costs have not been matched; the table supports a predictive comparison under the conditions described above.

\subsection{Role of Internal Mechanisms}
\label{sec:exp-components}

We keep the training configuration fixed and vary only three groups of core mechanisms to examine the effects of modal nodes, training-inference coupling, and the fixed-point objective on predictive performance. All variants are trained independently; evaluation is performed in \textsc{Frozen} mode so that fast-parameter updates or state correction during evaluation do not obscure the differences in training mechanisms.

\begin{table}[htbp]
\caption{Core-mechanism ablations: mean and sample standard deviation over three seeds; $\Delta\mathrm{PPL}$ is defined as the full-model PPL minus the variant PPL.}
\label{tab:exp-core-ablations}
\centering\small
\begin{tabular}{@{}lcc@{}}
\toprule
\textbf{Training variant} & \textbf{Validation PPL $\downarrow$} & \textbf{$\Delta$PPL} \\
\midrule
Full model & $36.700\pm0.832$ & -- \\
Modal nodes disconnected & $37.887\pm0.734$ & $-1.187$ \\
Fast-parameter updates and state correction disabled & $36.657\pm0.599$ & $+0.043$ \\
Fixed-point objective removed & $36.678\pm0.906$ & $+0.022$ \\
\bottomrule
\end{tabular}
\end{table}

The full model retains all mechanisms. The other three ablation variants are configured as follows:
``Modal nodes disconnected'' zeros all signal paths between the modal node region $M$ and the other node regions, so that $M$ neither receives nor sends signals. At this point, the coordination region $C$ contains only the self-coordination node $E$, which independently performs all global coordination functions, while the working node region $W$ continues to evolve independently on the same state tensor;
``Fast-parameter updates and state correction disabled'' retains measurement and the shared internal objectives but disables state correction, fast-parameter updates, and PPO data collection; ``Fixed-point objective removed'' only sets the corresponding training coefficient to zero. The three ablation variants are trained independently and are all evaluated in \textsc{Frozen} mode.

The results in Table~\ref{tab:exp-core-ablations}~show that disconnecting the modal nodes increases validation PPL by an average of 1.187, with the same direction of change across all three seeds, indicating a measurable contribution of the modal node region $M$ to predictive performance. The limited increase also shows that the model retains its overall predictive capability after the modal nodes are disconnected, which in turn indicates that the self-coordination node $E$ can still independently support the model's global coordination function without $M$. Disabling fast-parameter updates and state correction or removing the fixed-point objective has a small effect on PPL in \textsc{Frozen} mode. This indicates that the additional constraints they provide during training have a weak relationship with predictive performance under \textsc{Frozen} evaluation, and that their effects are more concentrated on adaptation during execution.

\subsection{Training-Inference Coupling}
\label{sec:exp-online-cost}

This subsection examines the actual engineering execution of training-inference coupling: a trained checkpoint is held fixed, and only the allowance for fast-parameter updates and state correction during evaluation is switched. Table~\ref{tab:app-exp-smoke} presents paired runtime diagnostics for an intermediate BookCorpus checkpoint.

\begin{table}[htbp]
\caption{Runtime diagnostics for an intermediate BookCorpus checkpoint. Seed 42, step 100, 16 validation blocks, and 7,936 evaluation tokens; GPU memory is the approximate peak allocated amount.}
\label{tab:app-exp-smoke}
\centering\small\setlength{\tabcolsep}{3pt}\renewcommand{\arraystretch}{1.22}
\begin{tabular}{@{}>{\raggedright\arraybackslash}p{\dimexpr0.54\linewidth-6pt\relax}>{\centering\arraybackslash}p{\dimexpr0.23\linewidth-6pt\relax}>{\centering\arraybackslash}p{\dimexpr0.23\linewidth-6pt\relax}@{}}
\toprule
\textbf{Metric} & \textbf{\textsc{Frozen}} & \textbf{\textsc{Adaptive}} \\
\midrule
NLL $\downarrow$ & 8.512188 & 8.512103 \\
Actual state evolution count & 8,256 & 8,256 \\
Simulation call count & 3,120 & 3,120 \\
Simulation-based measurement count & 528 & 528 \\
Direct measurement count & 64 & 64 \\
Fast-parameter update count & 0 & $\bm{1,056}$ \\
State correction count & 0 & $\bm{528}$ \\
Peak allocated GPU memory (MB) & $\approx 112$ & $\bm{\approx 175}$ \\
\bottomrule
\end{tabular}
\end{table}

\textsc{Adaptive} adds 1,056 fast-parameter updates and 528 state corrections, with higher recorded GPU memory usage. The mean relative error of this checkpoint's direct-measurement residual is 0.653, above the calibration threshold of 0.5, and 64 corrections driven by unreliable direct measurements are suppressed. \textsc{Frozen} also performs diagnostic measurement; the cost difference between the two columns reflects only the additional adaptation and correction and cannot be used to estimate the overhead of the entire measurement module.

\subsection{Training Stability and Scaling}
\label{sec:exp-scale-generation}

Figure~\ref{fig:exp-training-loss-en}~shows the complete 788-step training NLL curves for both models at seed 42. The horizontal axis uses the cumulative number of training tokens so that the training budget is consistent with Table~\ref{tab:exp-bookcorpus}. The comparison of the training curves shows a smoother pattern of loss reduction for BriLLM-MetaPred, which also indicates more stable training dynamics.

\begin{figure}[htbp]
\centering\includegraphics[width=0.94\linewidth]{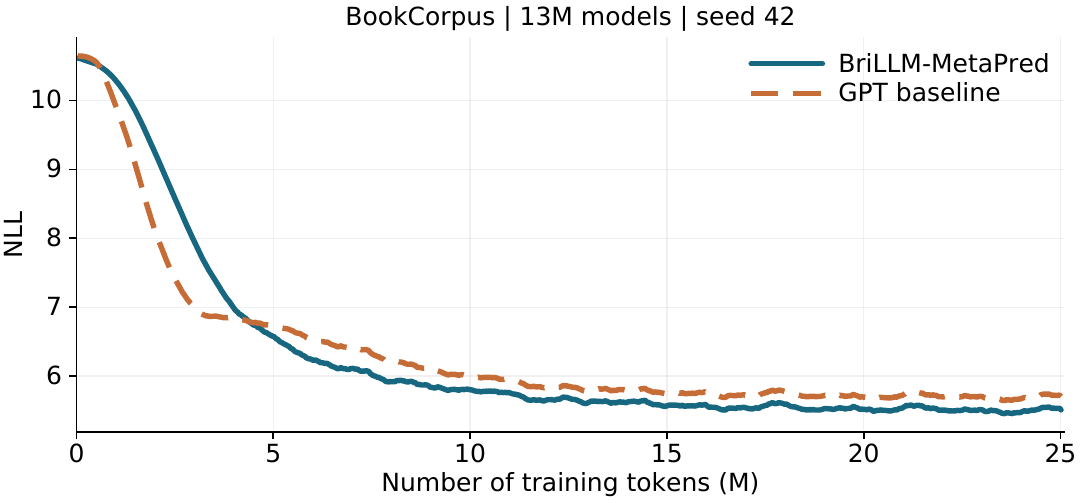}
\caption{Training NLL of 13M BriLLM-MetaPred and the GPT baseline on BookCorpus (seed 42).}
\label{fig:exp-training-loss-en}
\end{figure}

We further examine whether a model with approximately 129M parameters can complete training and observe its generation behavior. The model uses the sparse-auxiliary training regime on THUCNews, with one run at each of two training window lengths. The two runs have different training budgets and evaluation coverage, so no direct comparison is made between the rows. The final training results are presented in Table~\ref{tab:exp-scale}.

\begin{table}[htbp]
\caption{129M BriLLM-MetaPred completes training and evaluation at both training window lengths, each with seed 42.}
\label{tab:exp-scale}
\centering\small\setlength{\tabcolsep}{3pt}\renewcommand{\arraystretch}{1.22}
\begin{tabular}{@{}>{\raggedright\arraybackslash}p{\dimexpr0.26\linewidth-6pt\relax}>{\centering\arraybackslash}p{\dimexpr0.20\linewidth-6pt\relax}>{\centering\arraybackslash}p{\dimexpr0.21\linewidth-6pt\relax}>{\centering\arraybackslash}p{\dimexpr0.15\linewidth-6pt\relax}>{\centering\arraybackslash}p{\dimexpr0.18\linewidth-6pt\relax}@{}}
\toprule
\textbf{Input$\rightarrow$output (tokens)} & \textbf{Training set (tokens)} & \textbf{Validation set (tokens)} & \textbf{BPC $\downarrow$} & \textbf{PPL $\downarrow$} \\
\midrule
$16\rightarrow16$ & 470,404,384 & 16,384 & 3.883050 & $\bm{14.754158}$ \\
$512\rightarrow512$ & 133,826,560 & 7,635,968 & 4.766332 & $\bm{27.215033}$ \\
\bottomrule
\end{tabular}
\end{table}

The short-window run completes 459,380 updates, while the long-sequence run completes 4,085 updates and is evaluated on 14,914 validation blocks. The two rows differ in training budget, evaluation coverage, and update count, so the metrics are not compared directly across rows; the table shows that training and evaluation can be completed at both training window lengths at a scale of approximately 129M parameters. Because the number of training targets and evaluation coverage of the GPT baseline differ from those of our runs, the current results do not establish a scale-performance ranking or validate a fitted scaling law.

\subsection{Language Generation Examples}
\label{sec:exp-generation}

\paragraph{Chinese generation comparison with the same inputs.}
Table~\ref{tab:exp-generation} presents the outputs of 13M BriLLM-MetaPred and the corresponding GPT baseline model. Both models use training seed 42, receive the same complete prompts, and generate 50-token continuations by greedy decoding. Each prompt is independently reset, and no teacher forcing is used.

\begingroup\small\setlength{\tabcolsep}{3pt}\renewcommand{\arraystretch}{1.12}
\begin{longtable}{@{}>{\raggedright\arraybackslash}p{0.20\linewidth}>{\raggedright\arraybackslash}p{0.37\linewidth}>{\raggedright\arraybackslash}p{0.37\linewidth}@{}}
\caption{Chinese generation comparison with the same inputs.}\label{tab:exp-generation}\\
\toprule
\textbf{Input prompt} & \textbf{BriLLM-MetaPred} & \textbf{GPT baseline} \\
\midrule\endfirsthead
\multicolumn{3}{l}{\small Table~\thetable~(continued)}\\\toprule
\textbf{Input prompt} & \textbf{BriLLM-MetaPred} & \textbf{GPT baseline} \\
\midrule\endhead
\bottomrule\endfoot
01 中国队在第二场比赛中3比0大胜 & 。此外，中国队还有1000万人次，中国队在2008年10月份以来，中国队在2008年10月份以来首次 & 日本队，中国队在第三场比赛中以1比0战胜日本队，中国队在第四场比赛中以1比0战胜日本队，中国队在第四 \\ \midrule
02 欧盟决定为经济疲弱的成员国提供总额达 & 到1.2亿美元。此外，美国政府将继续保持稳定增长。此外，美国政府将在2010年底前将有望进一步加强国 & 到1000亿欧元的纾困金，这是欧盟决定为经济增长提供的最大障碍。欧盟委员会主席巴罗佐表示，欧盟委员会 \\ \midrule
03 最近中医膏方养身保健成为了 & 保障房的安全保障，保障房建设保障性安居工程建设安全保障工程建设安全保障安全保障措施保障安全保障安全保 & 一个新的消费方式，但是，这种方式的确是一种新的消费方式，但是，这种方式的确是一种新的消费方式，但是， \\ \midrule
04 消费者在购买家具的时候非常注重 & 要的原因是，这些都是在网上看到的。”“在这个时候，我们看到了这个消息，我们看到了这个时候，我们看到了 & 家具的质量，而且家具的质量也是非常重要的。”在家具行业，家具的质量是否符合国家标准的标准，家具的质量 \\ \midrule
05 儿童房甲醛超标5倍，即有引发 & 网络购物网络购物网站，网络购物网站、网络购物网站、网络网络建设、网络网络建设网络建设网络建设网络建设 & 甲醛的可能。”“甲醛超标”是一种新型甲醛释放量超标的新型甲醛释放量超标，这种新型甲醛释放量超标的新型 \\ \midrule
06 双色球一等奖的奖金有望达到 & 1000万美元。而在这个过程中，我们认为，这样的结构性机会是一个非常有效的解决方案。“我们的目标是在 & 1000万元。据悉，这是中国彩票双色球第10009期开奖的一张彩票，中奖号码为“0311111111 \\ \midrule
\end{longtable}
\endgroup

Both models produce locally coherent news-style text after some prompts, but both exhibit repetition, topic drift, and unsupported numbers.

\paragraph{English generation comparison with the same inputs.}
Table~\ref{tab:exp-generation-en}~presents same-prompt continuations from two approximately 13M models on BookCorpus. Both use training seed 42 and the checkpoint at step 788; with the same sampling seed, the prompt consists of the first 16 BPE tokens of a fixed validation block. Each continuation contains 128 tokens, with temperature 0.8 and top-k 40; each prompt is independently reset. BriLLM-MetaPred uses \textsc{Adaptive} mode, and the generation process does not read subsequent reference tokens.
The main text presents the first six examples in a fixed order, retaining each full continuation; line breaks in the outputs are marked by ``[/R]''.

Both models can generate English phrases and locally coherent text, but also exhibit repeated punctuation, grammatical mismatches, and long-range semantic incoherence.

\begingroup\small\setlength{\tabcolsep}{3pt}\renewcommand{\arraystretch}{1.12}
\begin{longtable}{@{}>{\raggedright\arraybackslash}p{0.48\linewidth}>{\raggedright\arraybackslash}p{0.48\linewidth}@{}}
\caption{English generation comparison with the same inputs.}\label{tab:exp-generation-en}\\
\toprule
\textbf{BriLLM-MetaPred} & \textbf{GPT baseline} \\
\midrule\endfirsthead
\multicolumn{2}{l}{\small Table~\thetable~(continued)}\\\toprule
\textbf{BriLLM-MetaPred} & \textbf{GPT baseline} \\
\midrule\endhead
\bottomrule\endfoot
\multicolumn{2}{@{}p{0.98\linewidth}@{}}{\textbf{01 Common input:} gregory zorzos  [/R]  100 alphanumeric crosswords  [/R]  gregory zor}\\* \hline
{\_ - one and the same one of your body than to see the the room - at a day of his feet ? "  [/R]  " , " no , the house and the house of the side . and the day of the world .  [/R]  "  [/R]  one of his life , we were in a different day , and the way you 're sure , " if the way had out for a long time to do , we do n't think it . i have the door and did n't be back at the world . but it was some important to be more . and this we 'm not a bit of the rest , and the room was in the small } & {- the ,  [/R]  he said \_  [/R]  , as but it .  [/R]  \_ " yes , and the one .  [/R]  chapter 4  [/R]  the ground , a good - 2 :  [/R]  the floor of the other the air .  [/R]  it , and the day that '  [/R]  '  [/R]  ' : * *  [/R]  the last day , this 's an year is the house . he is his first and he was not not like the most of the door and the same . if i have .  [/R]  " i 'll have no good \_ , the air he was a bit of them from the most of the first , and i have to be going to get a lot } \\ \midrule
\multicolumn{2}{@{}p{0.98\linewidth}@{}}{\textbf{02 Common input:} lover 's knot . "  [/R]  " i 'm serious , morris . i want a }\\* \hline
{second world , i do n't know that it 's been a new , " \_ \_ \_ \_ he would see a good man and do what it is to be a moment . he was a few of the floor of the other , and how happened to you do n't be . " you know , she 's not good .  [/R]  i need the last place i know who " , " i 'm a idea what she asked . "  [/R]  " what was there or this he was a few hundred than i have just to understand her . "  [/R]  " she said , " if you have it . \_ "  [/R]  " she is a better more than a } & {right . " i do n't see you , " i do n't have to it was too . . " she would have to a a time to do you and i know ... you want to be a little . " i 'm n't know ... "  [/R]  "  [/R]  "  [/R]  "  [/R]  you have to know you . "  [/R]  " i 're n't go the same , her hair and i 'm n't want to take a more in a good . you had be one than about you . " why , " she wanted to talk with a place . " she was a little to him , he was still like a lot to go to i said . i } \\ \midrule
\multicolumn{2}{@{}p{0.98\linewidth}@{}}{\textbf{03 Common input:} ibn sa'ûd assembled muslims and , slandering them , said , }\\* \hline
{" i have come to the one . the long thing .  [/R]  " i are a little part it is on to be an . the new .  [/R]  we 're all the day . " what he 's not so , you 're going to be going to give a time if you do . "  [/R]  " we 've been for what was to be .  [/R]  " i did n't know that , he had seen that all the ground . "  [/R]  " i had my own , " 's sure she 'd never been a good man .  [/R]  " even that , to have not much to get on my father . "  [/R]  * " he said . "  [/R]  } & {and the two a great , they are n't keep , that is a most of the other ,  [/R]  and for your way that was to the way to you have to the same , that 's just a very more than you are you to be the way to have the table of the rest of the people is .  [/R]  but this and they were the house , and a the woman , the first of the little day , who in a whole .  [/R]  that had been the other ; . you were and the other of the table .  [/R]  no , when it , in the the next , he was a life , , the time , and her for } \\ \midrule
\multicolumn{2}{@{}p{0.98\linewidth}@{}}{\textbf{04 Common input:} sured caressing . " ah . i see , " he said slowly , a trace }\\* \hline
{of the man was n't not here at the house .  [/R]  " i go to the two years . i 'm already just a one of their side . she did n't need to be in the two , i have done to the table and the man would n't look . "  [/R]  " i 'd thought i could n't make what i was to be able to see the two thing . the good day .  [/R]  they 'd found your eyes . " the other face is a little the night and he told us for him . but "  [/R]  " we had to the one , the other . he looked into the air . " he said . " i 'm } & {, "  [/R]  " i 'd made to you did n't know . "  [/R]  "  [/R]  " he was not to stop you . "  [/R]  "  [/R]  " she could n't .  [/R]  " i could n't you you know she 's all . "  [/R]  " he had her , "  [/R]  "  [/R]  " let her , "  [/R]  " i did i 'm the first ? "  [/R]  "  [/R]  " she 'd be . they was going to do n't be like a good . " i want to the ground . they do not in this . "  [/R]  " i 'm that they did n't know . " do you do n't know i have to think you have to } \\ \midrule
\multicolumn{2}{@{}p{0.98\linewidth}@{}}{\textbf{05 Common input:} - looking mahogany bar off to the side . " let 's get a drink . }\\* \hline
{ [/R]  " i mean .  [/R]  " i do n't know . "  [/R]  " you have to give me to do that . '  [/R]  i 'll be on you . "  [/R]  " she said ; " it , " said that , not . " what you do n't do you . "  [/R]  " what will i have to be that for you at the way , he , " 's right . "  [/R]  " let it you be , " you want to make a bit . " she 's all n't i 'll have to be it and you , just . "  [/R]  " you are you not a place and you . " we can just like that , } & {" i n't believe it was you . "  [/R]  " i 'll n't get to get his life . " it ? " you 're n't want to her , "  [/R]  "  [/R]  "  [/R]  when you to look . " she 's face . " what was n't do n't have to be going to you were a few of the world 's hand of the way to you could n't go into the last a more i know he took her face with it is it 's just just . she had been out of me , my head the first . "  [/R]  " she 's n't have you do anything and i could n't like you know it will hear her , } \\ \midrule
\end{longtable}
\endgroup

In terms of context sensitivity, BriLLM-MetaPred and the GPT baseline have comparable output performance in English. In Chinese, however, the GPT baseline shows stronger context awareness (i.e., the model's memory capability), while BriLLM-MetaPred performs substantially worse in this respect. This can be understood as a characteristic of its capability for creative, free-form language generation. The characteristics of both models' language-generation capabilities will be explored further in future work.

Overall, the experimental results support the goal of this paper in four respects: (1) under matched training budgets, BriLLM-MetaPred in the continuous-execution training regime achieves PPL comparable to the equivalent GPT baseline across two seeds;
(2) in the core ablations, disconnecting the modal nodes consistently degrades prediction in \textsc{Frozen} mode, indicating that the modal node region $M$ makes a measurable contribution to predictive performance; the limited increase shows that the model retains its overall predictive capability after the modal nodes are disconnected, which in turn indicates that the self-coordination node $E$ can still independently support the model's global coordination function without $M$;
(3) runtime diagnostics show that training-inference coupling does occur, and that fast-parameter updates and state correction both execute normally and are suppressed by calibration error; (4) at a scale of approximately 129M parameters, the model completes training and evaluation and generates locally coherent text. Together, these four results support the three core features listed in Section~\ref{sec:paradigm} and provide evidence for the engineering feasibility of the first language-model implementation of MetaPred.

\section{From the Primitive Specified by Mathematical Theory to the Engineering Architecture}

The mathematical theory of AGI specifies a primitive, rather than an architectural blueprint. It states that the model must have an endogenous fixed-point objective, representation and prediction must arise from the same dynamics, training and inference must proceed together, and a self-measurement mechanism must exist. It does not state that the model carrier must be a SiFu graph, that there must be two networks, or that there must be a particular number of operating modes. What, then, determines each structural choice in BriLLM-MetaPred: the theory, current hardware, or the designer's preference? Below, we assign each structural choice to one of three categories: strict theoretical requirements, uniquely feasible forms under current constraints, and surface-level parameters that do not determine paradigm identity. This classification does more than catalog the choices. It leads to a clear conclusion: \emph{on existing silicon accelerators, each key structure in BriLLM-MetaPred has only one feasible form}.

\paragraph{Strict theoretical requirements}

Four features of BriLLM-MetaPred are required by the primitive itself. Every MetaPred implementation must have corresponding features, regardless of its model carrier, scale, or hardware.

First, a single dynamical state tensor. If representation and prediction must arise from the same dynamics, they cannot be separated by an encoder--decoder decomposition; a single tensor $\mathbf{S}_{t,u}$ must support both. If representation were computed by one set of parameters and prediction by another, they would arise from two different dynamics, immediately breaking representation-prediction unity. $\mathbf{S}_{t,u}$ directly expresses this requirement.

Second, an endogenous fixed-point loss. The theory requires a signal that measures the self-consistency of the state under its own evolution. External labels cannot provide this signal: they constrain only the emitted distribution, not the state trajectory. $\mathcal{L}_{\text{fp}}$ and the self-measurement residual $\Delta_{\text{self}}^{(K)}$ are not design choices; they directly express the fixed-point requirement. Removing them would remove the self-supervised core required by the theory.

Third, the self-measurement subprocess and the main process share the same dynamics routine. The theory requires the state to measure itself. If the routine used for measurement differed from the routine used for evolution, measurement would compare the state with a different dynamics. The resulting difference would measure a discrepancy between two dynamics, rather than inconsistency within the state itself, and would not serve the theoretical requirement. Therefore, $\Phi_q^{\text{step}}$ must serve both processes, with only the counter $u$ distinguishing them. This is required to preserve the validity of the measurement semantics, rather than to save parameters.

Fourth, training and inference must share the same evolution structure. Their required coupling means that the training path must also exist during inference, and the inference path must also exist during training. Separate training and inference forward paths would break this coupling, necessarily reducing self-measurement to a training-only auxiliary loss and removing the state-correction mechanism from the inference path. It follows that any division of execution according to the permitted scope of parameter updates or state correction must define modes of the same dynamics. The theory does not require exactly three modes; it requires every mode to instantiate the same structure. The three-mode design follows from two joint requirements: evaluation without updates (\textsc{Frozen}) and deployment with bounded updates (\textsc{Adaptive}). Both are operating conditions that the theory's own premises require the model to support.

These are the theory's four hard constraints. Every MetaPred implementation must satisfy them; BriLLM-MetaPred does so.

\paragraph{Unique feasibility under current constraints}

Four additional features are required not directly by the primitive itself, but by the requirement that the primitive execute on existing hardware at an acceptable cost. Given current hardware and the four theoretical constraints, each has only one feasible form. A different hardware substrate could permit different forms; the result would be another implementation of the same MetaPred primitive, rather than a refutation of the current implementation.

First, a SiFu model carrier rather than any hierarchical graph. The theory requires a model carrier more general than the specific architectures of current machine learning, while existing accelerators execute only dense tensor operations efficiently. A hierarchical graph must maintain alignment between topology and tensor shapes, violating the theoretical requirement for weak coupling between the model carrier and dynamics. SiFu's fully connected, bidirectional graph is the only structure that satisfies both requirements: it provides maximal topological generality, and its three node-axis mixing operators map directly to dense tensor operations. Other graph structures either lose generality or cannot execute efficiently. SiFu is therefore the only feasible model carrier on current hardware, rather than a matter of designer preference.

Second, two networks rather than one or three. If the controller and state network shared an encoder, the controller's policy gradients would flow directly into the general representation. The theory's parameter-heterogeneity requirement explicitly requires the update source for control parameters (PPO) to be completely isolated from the update sources for slow and fast parameters. Adding a third network to separate the controller's policy and value heads would instead violate the parameter-sharing principle of the actor-critic architecture: the policy and value must share the same state representation, or advantage estimation becomes unreliable. Two networks are therefore the only possible decomposition: one supports state dynamics and the other the policy-value mapping, with independent parameters and separate gradient paths.

Third, five gated-mixing stages within each evolution step. The theory requires the state to be updated by the same dynamics, but does not specify how that update should be organized. Any self-updating graph dynamics must perform four functions: incorporate external information into the current state; route information according to each node's role; compress the global context into a coordination anchor; and write the coordinated result back locally. Incorporating external information requires input mixing; the last two functions require a coordination node and a broadcast operation; writeback itself requires gated writeback. Within information routing, nodes must also determine their relative emphasis on the working, modal, fast, and slow functions, which requires role assignment. The five stages are therefore not an interchangeable arrangement, but the minimum necessary decomposition of these four functions on the node graph. The stages can be subdivided further, but their number cannot be reduced: removing any stage makes it impossible to satisfy all the functional requirements simultaneously.

Fourth, a bounded auxiliary pool and a system of measurement tiers. The theory requires finite budgets for measurement and updates; otherwise, the controller has no notion of cost and cannot learn a resource-allocation policy. This requires a set of discrete tiers and a bounded pool from which the controller can sample. The tier granularity must distinguish three functions: an immediate approximation (direct measurement), shallow simulation (short-horizon), and a complete parameter-group cycle (long-horizon). Skipping measurement provides their lower bound. Four tiers are the smallest set that covers these three functions together with a zero signal. Other schemes of comparable granularity are conceivable, but they must cover at least the same four functions; otherwise, they lose either budget control or measurement depth. In the current implementation, these four tiers are the only feasible set.

These four features are uniquely feasible given the theory and current hardware. BriLLM-MetaPred adopts them because of these constraints, rather than convenience.

\paragraph{Surface-level parameters independent of paradigm identity}

The remaining choices are required neither by the theory nor by current hardware. Their values do not determine whether the model is MetaPred. Changing them changes neither a strict theoretical requirement nor a uniquely feasible form; they concern only scale, hyperparameters, and function classes.

First, the specific node count $N$, feature width $d$, and partition sizes $|I|,|O|,|M|,|W|$. Any values that preserve the reachability required by the node-axis mixing operators and keep the parameter count tractable define another instance of the same MetaPred primitive.

Second, the parameter-group count $Q$, temperature $\tau$, specific four-way role partition, number and values of correction strengths, and loss weights $\lambda_{\text{int}},\lambda_p,\lambda_f,\lambda_d,\lambda_c,\lambda_s$. These are all hyperparameters. They do not alter the four theoretical constraints or the four uniquely feasible forms described above. Their current values reflect tuning rather than derivation.

Third, the specific neural operators---GELU, layer normalization, and the particular two-layer mapping used by the state predictor---and the initialization scheme. The theory constrains the functions these operators must perform, such as providing a smooth nonlinearity through GELU or removing channel mean and scale through layer normalization, rather than the function class used to implement them. Substituting other operators that serve the same functions leaves the model a MetaPred implementation.

The value of this three-way classification goes beyond cataloging the choices. It establishes three facts. First, the theory imposes four strict requirements that no MetaPred implementation may violate. Second, current hardware imposes four uniquely feasible forms that cannot be replaced on existing accelerators. Third, designers retain freedom over three groups of surface-level parameters whose values affect performance rather than paradigm identity.

This leads to a conclusion more important than the classification itself. \emph{The paradigm identity of BriLLM-MetaPred is determined entirely by its four strict theoretical requirements and four uniquely feasible forms, independently of all surface-level parameters.} A future MetaPred implementation on different hardware, or with different tuning, may change any surface-level parameter and even any uniquely feasible engineering form. As long as it satisfies the four strict requirements, it remains a MetaPred implementation and automatically satisfies the theoretical requirements for representation-prediction unity, endogenous self-supervision, coupled training and inference, and a self-referential fixed point. This defines the connection between the mathematical theory of AGI and a specific machine: there is one primitive with potentially many implementations, while the current BriLLM-MetaPred is the \emph{only} feasible implementation that satisfies all the theoretical constraints on existing silicon accelerators.

This conclusion also provides a clear assessment of the current focus on scaling. If the preceding deductions hold, general intelligence cannot emerge from scale: increasing scale only increases the values of surface-level parameters, without changing the four strict requirements or the four uniquely feasible forms. If scaling did produce general intelligence, it would be a fortunate engineering outcome: somewhere, it would necessarily have incidentally implemented the mechanisms required by the four strict requirements, and scaling laws themselves do not guarantee that this will occur. General intelligence can only be supported by a specific model required by mathematics; BriLLM-MetaPred is the first instance of this class.

\section{Consistency Between MetaPred and Current Directions in AI Research}

The derivation of an optimal machine-learning model in the mathematical theory of AGI starts from a simple, intuitive premise: the model must support unrestricted self-reflection, that is, reasoning about its own reasoning, allowing infinite recursion and convergence \citep{zhao2026agitheory}. The result derived under this premise is MetaPred. MetaPred is therefore a mathematically self-referential machine-learning paradigm. Handling self-reference in computer science generally requires addressing the challenge of G\"odel's incompleteness theorems; Lambek's fixed-point lemma is known to support the construction of valid, G\"odel-compatible categorical self-referential structures \citep{Lambek1968}. MetaPred requires its model implementations to include a fixed-point loss term as a necessary expression of this compatibility. Our BriLLM-MetaPred satisfies this requirement.

This theoretical premise leads to three mechanistic conclusions for our model. First, even without external labels or signals, the fixed-point prediction objective provides a direction of evolution toward internal consistency, allowing the model to automatically achieve the greatest possible predictive generalization through that consistency. Second, the self-measurement mechanism performs real-time simulation within the context of dynamical evolution, giving both training and inference greater flexibility to improve efficiency and performance. Third, a single dynamical state makes training and inference proceed together: they share the same evolution structure, without separate backward paths for training and forward paths for inference. The following discussion builds on these three conclusions.

MetaPred is the first self-referential machine-learning paradigm: the same dynamical state is both the object and the carrier of evolution. The model can directly apply an endogenous adjustment to its direction of evolution on its own state, identify the current direction as unsustainable, and initiate a new direction on that same state.
Self-reference in the engineering implementation of MetaPred appears at three levels. At the level of the static model carrier, the self-coordination node $E$ represents the model's own global state. At the dynamical level, self-measurement takes the current model state as input and directly evaluates the model's own consistency. At the loss level, the fixed-point term requires state evolution to reach a self-predictive steady state in one step. All three are implemented through the same dynamics, enabling the model to evaluate and correct its own direction of evolution internally.
This provides a direct mechanism for creative thinking: a new direction begins by rejecting the old one, and both rejection and recreation are carried out by the same state through the same dynamics.

To clarify, the BriLLM-MetaPred model presented here is not a complete AGI system. According to the mathematical theory of AGI, a complete minimal AGI system contains at least four modules, each implemented as a MetaPred model: (1) a global process-control module; (2) a super-modal module, namely the language model---natural language is the super-modality that the theory predicts must be built in; (3) an input/output encoder--decoder module for perception and action; and (4) a multimodal coordination module. Through the maximal self-symmetry of its machine-learning elements, the MetaPred algorithmic framework ensures that nesting multiple MetaPred models automatically produces another MetaPred model with a fixed-point loss objective, demonstrating intrinsic self-similarity. This paper implements only the second module. The theory further requires the super-modality to be endogenous to the model, rather than placed on an equal footing with the other modalities of perception--action interaction at the model-interaction level. This implies that an AGI system must place the language modality above other multimodal information in multimodal processing. The theory also gives two interdependent necessary premises: (1) an AGI system must contain a language-model module; and (2) an AGI system must realize a perception--action loop that is fully self-consistent under unlimited self-reflection. These imply two complementary claims: an AGI system cannot be implemented entirely outside a language model, and a language model alone cannot achieve AGI. These two claims automatically free a complete AGI system from hallucinations because the perception--action loop at the interaction level provides testable external constraints and verification for purely symbolic thinking. This conclusion concerns the complete AGI system, rather than the language-model module implemented in this paper.

Given this positioning, the mechanistic significance of MetaPred can be developed at two levels.

First, recursive self-improvement. MetaPred implements representation-prediction unity through unified dynamics, with its dynamical state simultaneously playing the roles of model architecture and model state in traditional machine learning. On this basis, self-measurement is equivalent to real-time optimization of the model's own machine-learning execution at every time step, achieving optimality at the paradigm level. If the mathematical theory of AGI is correct, this realizes the vision of recursive self-improvement pursued in current AI research at the internal level of machine learning.

Second, continual learning. MetaPred's symmetry is reflected in training-inference coupling: \textsc{Train}, \textsc{Frozen}, and \textsc{Adaptive} are essentially instances of the same class of dynamical evolution. Even in \textsc{Frozen} mode, which performs no parameter updates, the model still executes self-measurement in full. This property is not an engineering addition but a mechanism required by the theory: every operating mode contains the same evolution process as training. The model's built-in mechanisms therefore realize the expectations of academia and industry for continual learning in large models.

In the limit, the fixed-point prediction objective acts as the final arbiter of the model's overall dynamics. When external labels are absent, MetaPred's dynamics remain governed by the fixed-point objective, which cannot be eliminated. This brings the model's reasoning to a state of final self-evaluation. We do not claim here that the MetaPred mechanism has achieved self-awareness or free will, but this purely mathematical reasoning provides a mathematically and computationally optimal path toward such mechanisms.

Finally, consider the derivation based on the natural-balance principle in the mathematical theory of AGI \citep{zhao2026agitheory}. Using only a ratio functional involving energy (system cost) and entropy (system information gain), the natural-balance principle likewise yields MetaPred as the optimal solution. If this principle holds, it directly asserts, at the level of the mathematical optimization problem, that MetaPred is the most ``economical'' machine-learning paradigm. MetaPred must adopt the general SiFu learning framework in which the model carrier and dynamics are loosely coupled. This further confirms that its model implementations are not constrained by scaling laws---the largest mechanistic bottleneck of current large GPT models: model performance is decoupled from model size, while the model automatically obtains the optimal trade-off between spatial and temporal dynamics from its architectural implementation. Therefore, any MetaPred implementation can decouple input and output lengths from model size. If this theoretical deduction holds, it offers an optimistic prospect for implementing a real AGI system: a runnable AGI system need not be a ``large'' model, nor does it require encyclopedic memory as a prerequisite for AGI capability. If this theoretical deduction holds, it has a sharp implication for the current focus on scaling: general intelligence should be expected from a specific model required by mathematics, rather than from scale; BriLLM-MetaPred is the first instance of this class.

\bibliography{references}

\begin{thebibliography}{14}
\providecommand{\natexlab}[1]{#1}
\providecommand{\url}[1]{\texttt{#1}}
\expandafter\ifx\csname urlstyle\endcsname\relax
  \providecommand{\doi}[1]{doi: #1}\else
  \providecommand{\doi}{doi: \begingroup \urlstyle{rm}\Url}\fi

\bibitem[Arnal \& Giraud(2012)Arnal and Giraud]{arnal2012cortical}
Luc~H Arnal and Anne-Lise Giraud.
\newblock Cortical oscillations and sensory predictions.
\newblock \emph{Trends in Cognitive Sciences}, 16\penalty0 (7):\penalty0
  390--398, 2012.

\bibitem[Bastos et~al.(2012)Bastos, Usrey, Adams, Mangun, Fries, and
  Friston]{bastos2012canonical}
Andre~M Bastos, W~Martin Usrey, Rick~A Adams, George~R Mangun, Pascal Fries,
  and Karl~J Friston.
\newblock Canonical microcircuits for predictive coding.
\newblock \emph{Neuron}, 76\penalty0 (4):\penalty0 695--711, 2012.

\bibitem[Dehghani et~al.(2019)Dehghani, Gouws, Vinyals, Uszkoreit, and
  Kaiser]{dehghani2019universal}
Mostafa Dehghani, Stephan Gouws, Oriol Vinyals, Jakob Uszkoreit, and {\L}ukasz
  Kaiser.
\newblock Universal transformers.
\newblock In \emph{International Conference on Learning Representations}, 2019.
\newblock URL \url{https://openreview.net/forum?id=HyzdRiR9Y7}.

\bibitem[Friston(2010)]{friston2010free}
Karl Friston.
\newblock The free-energy principle: a unified brain theory?
\newblock \emph{Nature Reviews Neuroscience}, 11\penalty0 (2):\penalty0
  127--138, 2010.

\bibitem[Goyal et~al.(2022)Goyal, Didolkar, Lamb, Badola, Ke, Rahaman, Binas,
  Blundell, Mozer, and Bengio]{goyal2022workspace}
Anirudh Goyal, Aniket Didolkar, Alex Lamb, Kartikeya Badola, Nan~Rosemary Ke,
  Nasim Rahaman, Jonathan Binas, Charles Blundell, Michael Mozer, and Yoshua
  Bengio.
\newblock Coordination among neural modules through a shared global workspace.
\newblock In \emph{International Conference on Learning Representations}, 2022.
\newblock URL \url{https://openreview.net/forum?id=9u1uF_B7X1M}.

\bibitem[Gu et~al.(2022)Gu, Goel, and R{\'e}]{gu2022s4}
Albert Gu, Karan Goel, and Christopher R{\'e}.
\newblock Efficiently modeling long sequences with structured state spaces.
\newblock In \emph{International Conference on Learning Representations}, 2022.
\newblock URL \url{https://openreview.net/forum?id=uYLFoz1vlAC}.

\bibitem[Hopfield(1982)]{Hopfield1982}
John~J. Hopfield.
\newblock Neural networks and physical systems with emergent collective
  computational abilities.
\newblock \emph{Proceedings of the National Academy of Sciences}, 79:\penalty0
  2554--2558, 1982.

\bibitem[Lambek(1968)]{Lambek1968}
Joachim Lambek.
\newblock A fixpoint theorem in complete categories.
\newblock \emph{Mathematische Zeitschrift}, 103:\penalty0 151--161, 1968.

\bibitem[Noda et~al.(2017)Noda, Amemiya, Shiramatsu, and
  Takahashi]{noda2017phase}
Takahiro Noda, Tomoki Amemiya, Tomoyo~I Shiramatsu, and Hirokazu Takahashi.
\newblock Stimulus phase locking of cortical oscillations for rhythmic tone
  sequences in rats.
\newblock \emph{Frontiers in Neural Circuits}, 11:\penalty0 2, 2017.

\bibitem[Radford et~al.(2018)Radford, Narasimhan, Salimans, and
  Sutskever]{radford2018improving}
Alec Radford, Karthik Narasimhan, Tim Salimans, and Ilya Sutskever.
\newblock Improving language understanding by generative pre-training.
\newblock \emph{OpenAI blog}, 2018.

\bibitem[Ramsauer et~al.(2021)Ramsauer, Sch{\"a}fl, Lehner, Seidl, Widrich,
  Gruber, Holzleitner, Adler, Kreil, Kopp, et~al.]{ramsauer2020hopfield}
Hubert Ramsauer, Bernhard Sch{\"a}fl, Johannes Lehner, Philipp Seidl, Michael
  Widrich, Lukas Gruber, Markus Holzleitner, Thomas Adler, David Kreil, Michael
  Kopp, et~al.
\newblock Hopfield networks is all you need.
\newblock In \emph{International Conference on Learning Representations
  (ICLR)}, 2021.

\bibitem[Vaswani et~al.(2017)Vaswani, Shazeer, Parmar, Uszkoreit, Jones, Gomez,
  Kaiser, and Polosukhin]{vaswani2017attention}
Ashish Vaswani, Noam Shazeer, Niki Parmar, Jakob Uszkoreit, Llion Jones,
  Aidan~N. Gomez, {\L}ukasz Kaiser, and Illia Polosukhin.
\newblock Attention is all you need.
\newblock In \emph{Advances in Neural Information Processing Systems}, 2017.

\bibitem[Zhao(2026)]{zhao2026agitheory}
Hai Zhao.
\newblock A mathematical theory of {AGI}.
\newblock Under review of ICLR 2027, 2026.

\bibitem[Zhao et~al.(2025)Zhao, Wu, Yang, Zou, and Hong]{zhao2025brillm}
Hai Zhao, Hongqiu Wu, Dongjie Yang, Anni Zou, and Jiale Hong.
\newblock {BriLLM}: Brain-inspired large language model.
\newblock \emph{arXiv preprint arXiv:2503.11299}, 2025.
\newblock Version 8.

\end{thebibliography}
\bibliographystyle{iclr2027_conference}

\appendix

\section{Conditional Convergence and Residual Decomposition Proofs}
\label{sec:proofs}
This section provides the theoretical basis for MetaPred's self-measurement mechanism, including the telescoping decomposition of the residual and the conditional convergence theorem. These results provide conditional mathematical guarantees for endogenous self-supervision; convergence of the model state during training is determined by the specific parameters.

\subsection{Telescoping Decomposition of the Residual}
For brevity, the proofs below omit the condition parameter $\chi$ of single-step evolution, writing $\Phi_q^{\text{step}}(\mathbf{S}) := \Phi_q^{\text{step}}(\mathbf{S}; \chi)$.

Consider a $K$-step simulation $\mathbf{S}_{\text{sim}}^{(K)} = \Phi_q^{(K)}(\mathbf{S}_u)$ starting from executed evolution step $u$. By definition, $\Phi_q^{(K)}$ is a composition of $K$ single-step evolutions. Define the single-step residual as
\[
\Delta_u^{(1)} = \Phi_q^{\text{step}}(\mathbf{S}_u) - \mathbf{S}_u
\]
Then the $K$-step residual $\Delta_{\text{self}}^{(K)}(u) = \mathbf{S}_{\text{sim}}^{(K)} - \mathbf{S}_{\text{pred}}$ (where $\mathbf{S}_{\text{pred}} = \Phi_q^{\text{step}}(\mathbf{S}_u)$) satisfies the following decomposition:

\begin{theorem}[Residual decomposition]
For $K \geq 2$, the $K$-step self-measurement residual can be decomposed into the cumulative sum of the first $K-1$ single-step residuals:
\[
\Delta_{\text{self}}^{(K)}(u) = \sum_{k=1}^{K-1} \Delta_{u+k}^{(1)}
\]
where $\Delta_{u+k}^{(1)}$ is the single-step residual produced at step $k$ along the simulation path.
\end{theorem}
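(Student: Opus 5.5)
The argument is a telescoping sum, once the single-step residuals along the simulation path are given a precise meaning. Define the simulated trajectory recursively by $\mathbf{S}_{\text{sim}}^{(0)}=\mathbf{S}_u$ and $\mathbf{S}_{\text{sim}}^{(k+1)}=\Phi_{q+k}^{\text{step}}(\mathbf{S}_{\text{sim}}^{(k)})$. By the definition of multi-step evolution as the composition $\Phi_q^{(K)}=\Phi_{q+K-1}^{\text{step}}\circ\cdots\circ\Phi_q^{\text{step}}$, this gives $\mathbf{S}_{\text{sim}}^{(k)}=\Phi_q^{(k)}(\mathbf{S}_u)$ for every $k\ge1$. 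In particular $\mathbf{S}_{\text{sim}}^{(1)}=\mathbf{S}_{\text{pred}}$, consistent with the convention in Section~\ref{sec:self-measurement}.

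The residual indexed $u+k$ should be read as the single-step residual of the simulated state reached after $k$ steps, using the parameter group active at that point:
\[
\Delta_{u+k}^{(1)}:=\Phi_{q+k}^{\text{step}}(\mathbf{S}_{\text{sim}}^{(k)})-\mathbf{S}_{\text{sim}}^{(k)}=\mathbf{S}_{\text{sim}}^{(k+1)}-\mathbf{S}_{\text{sim}}^{(k)}.
\]
This is the natural extension of $\Delta_u^{(1)}=\Phi_q^{\text{step}}(\mathbf{S}_u)-\mathbf{S}_u$. The index $u+k$ is only a label for the position along the simulated path. No actual evolution step is executed, since simulation leaves $u$ unchanged, and the conditions $\chi_u^{\text{meas}}$ are held fixed throughout.

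With this definition the proof is a single computation. Summing over $k=1,\dots,K-1$ telescopes:
\[
\sum_{k=1}^{K-1}\Delta_{u+k}^{(1)}=\sum_{k=1}^{K-1}\bigl(\mathbf{S}_{\text{sim}}^{(k+1)}-\mathbf{S}_{\text{sim}}^{(k)}\bigr)=\mathbf{S}_{\text{sim}}^{(K)}-\mathbf{S}_{\text{sim}}^{(1)}=\Phi_q^{(K)}(\mathbf{S}_u)-\mathbf{S}_{\text{pred}}=\Delta_{\text{self}}^{(K)}(u).
\]
The condition $K\ge2$ guarantees that the sum is nonempty. If $K=1$ were allowed, both sides would vanish trivially.

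The main obstacle is not algebraic but definitional. The statement writes $\Delta_{u+k}^{(1)}$ with the same symbol as the actual-step residual. One must therefore make explicit two points. First, the residual is evaluated on the simulated state $\mathbf{S}_{\text{sim}}^{(k)}$, not on the runtime state $\mathbf{S}_{u+k}$, which may differ because of state correction and new inputs. Second, the parameter-group phase advances as $q+k \bmod Q$. Once both points are fixed, the Frobenius-norm bound stated in Section~\ref{sec:self-measurement} follows from the triangle inequality applied to the telescoped sum. Taking $K=Q+1$ then recovers the identity $\Delta_{\text{self}}^{(Q+1)}(u)=\mathcal{C}_{q+1}(\mathbf{S}_{\text{pred}})-\mathbf{S}_{\text{pred}}$, using $\Phi_q^{(Q+1)}=\mathcal{C}_{q+1}\circ\Phi_q^{\text{step}}$.
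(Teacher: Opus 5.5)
Your proof is correct and matches the paper's argument: both write $\mathbf{S}_{\text{sim}}^{(K)}-\mathbf{S}_{\text{sim}}^{(1)}$ as a telescoping sum of successive differences along the simulated path, using $\mathbf{S}_{\text{sim}}^{(1)}=\mathbf{S}_{\text{pred}}$. Your explicit definition $\Delta_{u+k}^{(1)}:=\Phi_{q+k}^{\text{step}}(\mathbf{S}_{\text{sim}}^{(k)})-\mathbf{S}_{\text{sim}}^{(k)}$ states precisely what the paper leaves implicit in the phrase ``along the simulation path.''
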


\begin{proof}
By the definition of the composite operator,
\begin{align*}
\mathbf{S}_{\text{sim}}^{(K)} &= \Phi_{q+K-1}^{\text{step}} \circ \dots \circ \Phi_{q+1}^{\text{step}} \circ \Phi_q^{\text{step}}(\mathbf{S}_u) \\
&= \Phi_{q+K-1}^{\text{step}} \circ \dots \circ \Phi_{q+1}^{\text{step}}(\mathbf{S}_{\text{pred}})
\end{align*}
Let $\mathbf{S}_{\text{sim}}^{(k)}$ be the simulated state at step $k$. Then
\begin{align*}
\Delta_{\text{self}}^{(K)}(u) &= \mathbf{S}_{\text{sim}}^{(K)} - \mathbf{S}_{\text{pred}} \\
&= \sum_{k=1}^{K-1} \left( \mathbf{S}_{\text{sim}}^{(k+1)} - \mathbf{S}_{\text{sim}}^{(k)} \right) \\
&= \sum_{k=1}^{K-1} \Delta_{u+k}^{(1)}
\end{align*}
where $\mathbf{S}_{\text{sim}}^{(1)} = \mathbf{S}_{\text{pred}}$.
\end{proof}

This decomposition shows that a $K$-step simulation ($K>1$) captures the cumulative trend of the dynamics over multiple evolution steps rather than a single-step residual signal, providing a theoretical basis for the trend supervision $\mathcal{L}_{\text{dyn}}$.

\subsection{Conditional Convergence Theorem}
MetaPred's self-measurement mechanism draws on the idea of Lambek's fixed-point lemma\citep{Lambek1968}, but guarantees convergence only under specific conditions. We consider the full parameter-group cycle $\mathcal{C}_{q+1} = \Phi_{q+1}^{(Q)}$, which depends on the slow parameters $\Slow$ and runtime conditions $\chi$.

\begin{theorem}[Conditional convergence]
Assume that the full parameter-group cycle $\mathcal{C}_{q+1}$ is Lipschitz continuous on a closed subset $\mathcal{X} \subseteq \mathbb{R}^{N\times d}$, with Lipschitz constant $L < 1$ (i.e., it is a contraction). Then, by the Banach fixed-point theorem, there exists a unique fixed point $\mathbf{S}^* \in \mathcal{X}$ such that $\mathcal{C}_{q+1}(\mathbf{S}^*) = \mathbf{S}^*$, and for any initial state $\mathbf{S}_0 \in \mathcal{X}$, the iteration sequence $\mathbf{S}_{n+1} = \mathcal{C}_{q+1}(\mathbf{S}_n)$ converges linearly to $\mathbf{S}^*$.
\end{theorem}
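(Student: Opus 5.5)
The plan is to reduce the statement to the Banach fixed-point theorem on the complete metric space $(\mathcal{X},\|\cdot\|_F)$. We write the full parameter-group cycle as $\mathcal{C}:=\mathcal{C}_{q+1}$, with $\Slow$ and $\chi$ held fixed. First, completeness: $\mathbb{R}^{N\times d}$ with the Frobenius norm is a finite-dimensional normed space, hence complete, and a closed subset of a complete space is complete. One hypothesis must be made explicit, because Lipschitz continuity alone does not give it: $\mathcal{C}$ maps $\mathcal{X}$ into itself, i.e.\ $\mathcal{C}(\mathcal{X})\subseteq\mathcal{X}$. Without invariance the iterates could leave $\mathcal{X}$, where no contraction bound holds. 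I would state invariance as part of the standing assumption (it is implicit in ``contraction on $\mathcal{X}$''). I would also note that $\mathcal{X}$ is nonempty, and that it can be taken bounded in practice because the gated writeback and the Bounded operators keep states in a bounded region; this last remark is not needed for the proof.

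Next, existence and convergence via Picard iteration. Fix $\mathbf{S}_0\in\mathcal{X}$ and set $\mathbf{S}_{n+1}=\mathcal{C}(\mathbf{S}_n)$. By induction, $\|\mathbf{S}_{n+1}-\mathbf{S}_n\|_F\le L^n\|\mathbf{S}_1-\mathbf{S}_0\|_F$. This increment sum is exactly the telescoping structure of the Residual decomposition theorem, applied at the level of whole cycles instead of single steps. Summing the geometric series gives, for $m>n$,
\[
\|\mathbf{S}_m-\mathbf{S}_n\|_F\le \frac{L^n}{1-L}\,\|\mathbf{S}_1-\mathbf{S}_0\|_F ,
\]
so $(\mathbf{S}_n)$ is Cauchy. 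It therefore converges to some $\mathbf{S}^*$, and $\mathbf{S}^*\in\mathcal{X}$ because $\mathcal{X}$ is closed. Since $\mathcal{C}$ is Lipschitz, it is continuous, so passing to the limit in $\mathbf{S}_{n+1}=\mathcal{C}(\mathbf{S}_n)$ gives $\mathcal{C}(\mathbf{S}^*)=\mathbf{S}^*$.

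Uniqueness: if $\mathbf{S}^*$ and $\mathbf{T}^*$ are both fixed points in $\mathcal{X}$, then $\|\mathbf{S}^*-\mathbf{T}^*\|_F\le L\|\mathbf{S}^*-\mathbf{T}^*\|_F$ with $L<1$, which forces $\mathbf{S}^*=\mathbf{T}^*$.

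Linear rate: $\|\mathbf{S}_n-\mathbf{S}^*\|_F=\|\mathcal{C}(\mathbf{S}_{n-1})-\mathcal{C}(\mathbf{S}^*)\|_F\le L\|\mathbf{S}_{n-1}-\mathbf{S}^*\|_F$, so $\|\mathbf{S}_n-\mathbf{S}^*\|_F\le L^n\|\mathbf{S}_0-\mathbf{S}^*\|_F$, which is the bound quoted in the main text. As a computable a posteriori version, letting $m\to\infty$ in the Cauchy estimate yields $\|\mathbf{S}_n-\mathbf{S}^*\|_F\le \frac{L^n}{1-L}\|\mathbf{S}_1-\mathbf{S}_0\|_F$. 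With $K=Q+1$ and $\mathbf{S}_0=\mathbf{S}_{\text{pred}}$, the increment $\mathbf{S}_1-\mathbf{S}_0$ is exactly the long-horizon residual $\Delta_{\text{self}}^{(Q+1)}(u)$. This links the theorem back to self-measurement: a small long-horizon residual certifies proximity to $\mathbf{S}^*$.

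All analytic steps are routine; the main obstacle is the hypotheses. The theorem is only as meaningful as the invariance $\mathcal{C}(\mathcal{X})\subseteq\mathcal{X}$ together with $L<1$, and neither is guaranteed by the architecture. A further subtlety is that $\chi$ must be frozen along the iteration, since the real runtime conditions change with the budgets and the controller. I would therefore state explicitly that $\chi$ is fixed for the iteration, consistent with the simulation-based measurement holding $\chi_u^{\text{meas}}$ fixed. I would also remark that one sufficient condition for $L<1$ is that the product of the per-step Lipschitz constants of $\Phi_{q+1}^{\text{step}},\dots,\Phi_{q+Q}^{\text{step}}$ on $\mathcal{X}$ is below $1$, while leaving verification for a trained model open, as the paper does.
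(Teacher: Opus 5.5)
Your proposal is correct and follows the same route as the paper. The paper's proof is a one-line appeal to the Banach fixed-point theorem on the closed, hence complete, subset $\mathcal{X}\subseteq\mathbb{R}^{N\times d}$ with the Frobenius metric; you unpack that theorem through the Picard-iteration, Cauchy, uniqueness and rate arguments, and your added hypotheses ($\mathcal{C}_{q+1}(\mathcal{X})\subseteq\mathcal{X}$, $\mathcal{X}\neq\emptyset$, and $\chi$ held fixed along the iteration) are genuinely needed and only implicit in the paper's word ``contraction,'' so stating them is a useful clarification.
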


\begin{proof}
This is a standard application of the Banach fixed-point theorem: a contraction on a complete metric space has a unique fixed point. Here, the metric is induced by the Frobenius norm, and $\mathcal{X}$ is a closed subset (a complete subspace).
\end{proof}

\textbf{Important note}: the theorem above is a conditional result. During training, the slow parameters $\Slow$ are continually updated, and the Lipschitz constant $L$ of the parameter-group cycle $\mathcal{C}_{q+1}$ may exceed 1; convergence of the model to a fixed point during training is conditional. The self-measurement loss $\mathcal{L}_{\text{self}}$ encourages approximately contractive behavior locally, while global convergence is determined by the specific parameters.

\subsection{Geometric Property of the Self-Measurement Correction Term}
The self-measurement correction term $\delta\mathbf{S}_u = \eta_{\text{eff}} \widetilde{\Delta}_{\text{self}}^{(K)}(u)$ has a clear geometric interpretation:

\begin{proposition}[Line-segment constraint on the correction term]
If the simulated residual $\widetilde{\Delta}_{\text{self}}^{(K)}(u) = \Delta_{\text{self}}^{(K)}(u)$ is used, and the effective strength $\eta_{\text{eff}} \in [0,1]$, then the corrected state $\mathbf{S}_{u+1}$ lies on the line segment between the actual successor $\mathbf{S}_{u+1}^{\text{pre}}$ and the $K$-step simulation endpoint $\mathbf{S}_{\text{sim}}^{(K)}$.
\end{proposition}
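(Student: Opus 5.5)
The plan is to write the corrected state explicitly as a convex combination of the two endpoints. By Eq.~(\ref{eq:correction}) with the simulated residual substituted, $\mathbf{S}_{u+1} = \mathbf{S}_{u+1}^{\text{pre}} + \eta_{\text{eff}}\,\Delta_{\text{self}}^{(K)}(u)$. By the definition in the simulation-based measurement paragraph, $\Delta_{\text{self}}^{(K)}(u) = \mathbf{S}_{\text{sim}}^{(K)} - \mathbf{S}_{\text{pred}}$. Hence
\[
\mathbf{S}_{u+1} = \mathbf{S}_{u+1}^{\text{pre}} + \eta_{\text{eff}}\bigl(\mathbf{S}_{\text{sim}}^{(K)} - \mathbf{S}_{\text{pred}}\bigr).
\]
The goal is to bring this to the form $(1-\eta_{\text{eff}})\,\mathbf{S}_{u+1}^{\text{pre}} + \eta_{\text{eff}}\,\mathbf{S}_{\text{sim}}^{(K)}$. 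Together with $\eta_{\text{eff}}\in[0,1]$, that form is exactly the statement that $\mathbf{S}_{u+1}$ lies on the closed segment $[\mathbf{S}_{u+1}^{\text{pre}}, \mathbf{S}_{\text{sim}}^{(K)}]$. The hypothesis $\eta_{\text{eff}}\in[0,1]$ is taken as given. It can be arranged by restricting the controller's requested strength $\eta_u^{\text{req}}$ to $[0,1]$, since by Eq.~(\ref{eq:effective-correction-strength}) $\eta_{\text{eff}}$ is a minimum of nonnegative quantities and one of them is $\eta_u^{\text{req}}$.

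The key step, and the main obstacle, is identifying the 1-step simulation $\mathbf{S}_{\text{pred}} = \Phi_q^{\text{step}}(\mathbf{S}_u;\chi_u^{\text{meas}})$ with the actual pre-correction successor $\mathbf{S}_{u+1}^{\text{pre}} = \Phi_q^{\text{step}}(\mathbf{S}_u;\chi_u^{\text{real}})$. Both start from the same state $\mathbf{S}_u$ and use the same deterministic routine $\Phi_q^{\text{step}}$ with the same parameter-group index $q$. They agree precisely when $\chi_u^{\text{meas}}$ and $\chi_u^{\text{real}}$ coincide on every component that $\Phi_q^{\text{step}}$ actually reads: stage vector $e_{\text{stage}}$, input window, and the budget-dependent gating. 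The paper itself remarks that these conditions may differ, since the controller action is applied between the two evaluations. I would therefore make this an explicit standing assumption: the measurement conditions are those of the executed step, and the isolated random stream $\xi_u$ does not enter a deterministic simulation. Under that assumption, substituting $\mathbf{S}_{\text{pred}} = \mathbf{S}_{u+1}^{\text{pre}}$ gives the convex combination by one line of algebra.

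Finally, I would note what survives without the identification. In general $\mathbf{S}_{u+1} - \mathbf{S}_{u+1}^{\text{pre}} = \eta_{\text{eff}}(\mathbf{S}_{\text{sim}}^{(K)} - \mathbf{S}_{\text{pred}})$. So the corrected state always lies on the segment from $\mathbf{S}_{u+1}^{\text{pre}}$ toward $\mathbf{S}_{u+1}^{\text{pre}} + \Delta_{\text{self}}^{(K)}(u)$, which is a translate of $[\mathbf{S}_{\text{pred}}, \mathbf{S}_{\text{sim}}^{(K)}]$ by $\mathbf{S}_{u+1}^{\text{pre}} - \mathbf{S}_{\text{pred}}$. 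The deviation from the stated segment is then bounded in Frobenius norm by $\eta_{\text{eff}}\|\mathbf{S}_{u+1}^{\text{pre}} - \mathbf{S}_{\text{pred}}\|_F$. This also explains why the proposition is restricted to simulation-based residuals: a direct-measurement estimate $\widehat{\Delta}^{(2)}_{\text{self}}$ need not be a difference of two simulation endpoints at all.
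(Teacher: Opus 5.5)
Your argument is the paper's: substitute $\Delta_{\text{self}}^{(K)}(u)=\mathbf{S}_{\text{sim}}^{(K)}-\mathbf{S}_{\text{pred}}$ into the correction rule, identify $\mathbf{S}_{\text{pred}}$ with $\mathbf{S}_{u+1}^{\text{pre}}$, and read off the convex combination $(1-\eta_{\text{eff}})\mathbf{S}_{u+1}^{\text{pre}}+\eta_{\text{eff}}\mathbf{S}_{\text{sim}}^{(K)}$. The paper asserts that identification without justification, so stating it as an explicit hypothesis is the more careful version: $\chi_u^{\text{meas}}$ and $\chi_u^{\text{real}}$, and any randomness, must agree on everything $\Phi_q^{\text{step}}$ reads. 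Note, though, that such agreement is sufficient rather than ``precisely'' necessary for $\mathbf{S}_{\text{pred}}=\mathbf{S}_{u+1}^{\text{pre}}$.
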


\begin{proof}
By definition, $\mathbf{S}_{u+1} = \mathbf{S}_{u+1}^{\text{pre}} + \eta_{\text{eff}} (\mathbf{S}_{\text{sim}}^{(K)} - \mathbf{S}_{\text{pred}})$. Since $\mathbf{S}_{u+1}^{\text{pre}} = \mathbf{S}_{\text{pred}}$, we obtain
\[
\mathbf{S}_{u+1} = (1-\eta_{\text{eff}})\mathbf{S}_{\text{pred}} + \eta_{\text{eff}}\mathbf{S}_{\text{sim}}^{(K)}
\]
When $\eta_{\text{eff}} \in [0,1]$, this is a convex combination of the two points and therefore lies on the line segment between them.
\end{proof}

This property restricts state correction to the range of the simulation endpoints, maintaining the stability of the dynamics.

\section{Implementation Details and Update Scope}
\label{sec:implementation}
This section summarizes the engineering implementation details of MetaPred.

\subsection{Network Architecture and Initialization}
\begin{itemize}
    \item \textbf{Node and auxiliary module configurations}:
The state-network size varies across experiments.
The main models reported in this paper use the configurations below, where $N$ is the total number of nodes,
$d$ is the state dimension of each node, $r$ is the bottleneck dimension of the prediction and correction adapters,
and $d_h$ is the state-summary dimension:
\begin{center}
\small
\begin{tabular}{lrrrr}
\toprule
Experimental model & $N$ & $d$ & $r$ & $d_h$ \\
\midrule
Chinese 13M  & 256 & 256 & 8  & 16 \\
English 13M  & 96  & 132 & 8  & 16 \\
Chinese 129M & 256 & 928 & 32 & 32 \\
\bottomrule
\end{tabular}
\end{center}
All these main models use $|I|=16$, $|O|=16$,
$|E|=1$, and $|M|=40$.
The number of working nodes is
$|W|=N-|I|-|O|-|E|-|M|$.
Node masking in structural ablations follows the definitions of the corresponding experiments.
    \item \textbf{Parameter groups}: the total number of parameter groups is $Q=4$, and evolution step $u$ uses parameter-group index $q_u = u \bmod Q$; each parameter group contains independent $\operatorname{GateMix}$ weights, as defined in Eq.~(\ref{eq:gatemix}), and the groups are reused cyclically.
    \item \textbf{Initialization}:
    \begin{itemize}
        \item Input projection matrix $E_{\text{in}}$ and output projection matrix $W_{\text{out}}$: trained from scratch with Xavier uniform initialization.
\item Linear-layer weights: Xavier uniform initialization; biases are initialized to 0.
\item Layer normalization $\operatorname{LN}$: initialized as the identity mapping ($\gamma=1, \beta=0$).
\item Linear transformations in the core gated mixing module: output-layer weights are multiplied by a scaling factor of $0.1$ to prevent initial signals from exploding; newly added auxiliary heads are independently initialized at a scale of $10^{-3}$.
        \item Fast parameters $\Fast_u$: biases are initialized to zero; down-projection weights use Xavier uniform initialization, and up-projection weights use $\mathcal{N}(0, 10^{-3})$; initialization is random and independent of the slow parameters.
        \item Control parameters $\boldsymbol{\psi}$: randomly initialized independently of the other model parameters and trained from scratch.
    \end{itemize}
    \item \textbf{Slot projection vectors}: each input-window position $k$ has two learnable slot projection vectors $s_k$ and $p_k$ ($k=1,\dots,|I|$), corresponding to the slot embedding table and position embedding table in the implementation, respectively. Both are learnable parameters in $\mathbb{R}^d$, initialized from the normal distribution $\mathcal{N}(0, 0.02^2)$ and trained together with the slow parameters $\Slow$.
\end{itemize}

\subsection{Optimizers and Gradient Clipping}
All gradient updates follow the separate update rules defined above, with the following configurations:
\begin{itemize}
    
    \item \textbf{Slow-parameter optimizer}: AdamW with $(\beta_1, \beta_2)=(0.9, 0.999)$ and weight decay $\lambda_w=0.01$; the global gradient-norm limit is $c_{\Slow}=1.0$, implemented by the $\operatorname{ClipNorm}$ operator.
    \item \textbf{Fast-parameter optimizer}: gradient-clipped SGD with learning rate $\alpha_{\Fast}=0.05$ and gradient-norm limit $c_{\Fast}=1.0$; the controller action $a_u^{\text{str}}$ dynamically scales the update strength.
    
    \item \textbf{Control-parameter optimizer}: Adam with learning rate $3\times 10^{-4}$ updates the control parameters $\boldsymbol{\psi}$ using the PPO objective. Slow parameters $\Slow$ use AdamW, and fast parameters $\Fast_u$ use gradient-clipped SGD. The controller architecture and parameter count are given in Section~\ref{sec:controller}.
    \item \textbf{Stop-gradient rules}: gradient boundaries depend on how each quantity is used. Diagnostic simulation-based measurements, the historical observation summary $\mu_u$, and all components of controller input $o_u$ are detached using the $\text{sg}(\cdot)$ operator; when recomputing for supervision, predictions and residuals are recomputed with gradients retained for the slow prediction parameters and slow correction parameters; during training, direct measurements used for state correction are recomputed after the fast-parameter update, and when outer-loop gradients are enabled, the corresponding gradients are retained through the transition to the next state; reference signals from the $K=2$ and $K=Q+1$ simulations are detached. Simulation-based measurements retain gradients in training mode (\textsc{Train}) and train the shared dynamics through the self-measurement loss; in evaluation modes, gradients are not computed through them.
\end{itemize}

\subsection{Controller Network Implementation Details}
\label{sec:controller}

The controller network first processes the causal state summary $o_u$ with learnable layer normalization $\operatorname{LN}$, followed by two hidden layers, each of dimension 64 with Tanh activation:
\[
z_u=\tanh\!\left(W_2\tanh\!\left(W_1\operatorname{LN}(o_u)+b_1\right)+b_2\right),
\]
where $W_1\in\mathbb{R}^{64\times 66}$ and $W_2\in\mathbb{R}^{64\times 64}$. The shared representation $z_u$ feeds six action heads and one scalar value head. The six action heads produce discrete distributions over the measurement tier, measurement object, fast-parameter update switch, fast-parameter update strength, state-correction strength, and fast-parameter update sampling ratio, respectively. The numbers of categories are $4,9,2,3,3,3$, respectively, for a total of 24 category scores; the value head outputs the scalar estimate $V_{\boldsymbol{\psi}}(o_u)$.

The causal state summary $o_u$ has 66 dimensions and concatenates the following nine components:
\begin{itemize}
    \item Control state summary, 8 dimensions ($s_c(\mathbf{S}_u)$): the global mean, RMS, RMS after centering along the node axis, the RMS values of input/output/internal/self-coordination nodes, and the proportion of finite values;
    \item Recent runtime signals, 8 dimensions: prediction error, intrinsic energy, the RMS values of velocity and acceleration, the norms of the fast-parameter update and state correction, the previous reward, and the parameter version;
    \item Compatible measurement history, 8 dimensions ($\mathcal{Q}_u$): difference statistics of existing compatible measurements and the corresponding validity information;
    \item Measurement age, 9 dimensions: the normalized number of steps since the last measurement for each of the nine classes of measurement objects;
    \item Missing-record flags, 9 dimensions ($\zeta_u$): whether each of the nine classes of measurement objects has no record yet;
    \item Previous action, 6 dimensions: the indices of the six previous action components;
    \item Stage flags, 3 dimensions: a one-hot encoding of the invocation stage (encoding, autonomous, generation);
    \item Remaining budgets, 2 dimensions: the remaining fractions of the measurement budget and fast-parameter update budget;
    \item Runtime diagnostics, 13 dimensions: phase, sample-pool and prediction-queue occupancy, first- and second-order energy differences and their validity, self-measurement residual and loss, calibration status, and related diagnostics.
\end{itemize}
The total is $8+8+8+9+9+6+3+2+13=66$. When constructing the observation, selected continuous statistics first undergo signed logarithmic compression $f(x)=\operatorname{sign}(x)\log(1+|x|)$, followed by the learnable layer normalization within the controller network.

The nine classes of measurement objects are self-consistency, input nodes, output nodes, internal nodes, the self-coordination node, direct-measurement parameters, fast parameters, the state network, and the policy.
These classes correspond, in order, to the nine positions in the causal state summary components $\tau_u$ and $\zeta_u$ and to the nine candidate categories of $a_u^{\text{focus}}$.
$a_u^{\text{focus}}$ specifies the measurement object to focus on, while the measurement tier determines the actual measurement scope: direct measurement covers self-consistency and the focus object; short-horizon simulation additionally covers input, output, and internal nodes, as well as fast parameters; long-horizon simulation covers all nine classes, in which case $a_u^{\text{focus}}$ does not affect the set of objects.

The categories and values of the controller action components are as follows:
\begin{itemize}
    \item $a_u^{\text{tier}}$: 4 categories---skip measurement, direct measurement, short-horizon simulation, and long-horizon simulation;
    \item $a_u^{\text{focus}}$: one of the 9 registered measurement objects;
    \item $a_u^{\text{upd}}$: 2 categories---no update and request an update;
    \item $a_u^{\text{str}}$: 3 tiers, $0.25,0.5,1.0$;
    \item $a_u^{\text{corr}}$: 3 tiers, $0,0.25,0.5$;
    \item $a_u^{\text{ratio}}$: 3 tiers, $25\%,50\%,100\%$.
\end{itemize}
When the update switch is off, the sampling-ratio action head index is set to zero; budgets and available records also constrain execution, and requesting an update and executing it are two different steps.

The control parameters $\boldsymbol{\psi}$ comprise the layer normalization, two hidden layers, six action heads, and one value head, totaling approximately $1.02\times 10^4$ parameters. An independent Adam optimizer trains these parameters using the PPO objective; slow parameters $\Slow$ use AdamW, and fast parameters $\Fast_u$ use gradient-clipped SGD.

\subsection{Controller and Self-Measurement Scheduling}
\begin{itemize}
    \item \textbf{Measurement tiers}: $a_u^{\text{tier}} \in \{\textsc{skip}, \textsc{direct}, \textsc{short}, \textsc{long}\}$ corresponds to four tiers: skip measurement, direct measurement, short-horizon simulation, and long-horizon simulation. The short- and long-horizon tiers subdivide simulation-based measurement by depth. Skipping measurement returns a zero signal; direct measurement estimates the residual using the direct-measurement parameters without running a simulation; simulation-based measurement simulates the state, with depth determined by $K$, where $K=2$ denotes short-horizon simulation and $K=Q+1$ denotes long-horizon simulation. The current implementation uses $Q=4$, giving $K=5$.
    \item \textbf{Budget constraints}: the initial measurement budget $B_0^m$ and update budget $B_0^w$ depend on the experiment size. The current implementation uses $512/32$ or $4096/256$. The measurement tiers incur different costs, $(0,1,4,12)$, and the cost of fast-parameter updates also depends on the number of sampled observations; the corresponding operation is automatically disabled when its budget is exhausted.
    \item \textbf{Warm-up and coverage}: warm-up is measured in optimizer steps, with simulation-based measurement preferred during the first several steps; if the direct-measurement parameter calibration error exceeds the threshold (0.5), corrections driven by direct measurement are disabled, and simulation-based measurement may be forced; two long-horizon simulations must be separated by a minimum interval of 16.
\end{itemize}

\subsection{Loss Weights and Hyperparameters}
The multitask weights in Eq.~(\ref{eq:outer-loss}) are configured as follows:
\begin{itemize}
    \item Overall endogenous-loss weight: $\lambda_{\text{int}}=0.01$.
    \item Component loss weights: $(\lambda_p, \lambda_f, \lambda_d, \lambda_c, \lambda_s)=(1.0, 0.1, 0.5, 1.0, 0.1)$.
    \item Dynamical trend loss weights: $(w_s, w_m, w_e)=(1.0, 0.01, 0.1)$ for the state, measurement, and energy trends, respectively.
    \item PPO hyperparameters: GAE parameters $(\lambda_{\text{GAE}}, \gamma)=(0.95, 0.99)$; clipping range $\bar{\rho}_u \in [0.8, 1.2]$; value-function coefficient $c_V=0.5$; entropy coefficient $c_H=0.01$.
    \item Slow-update preview tolerances: $\epsilon_{\text{abs}}=10^{-4}$ and $\epsilon_{\text{rel}}=10^{-3}$.
\end{itemize}

\section{Reproducibility Conventions}
\label{sec:repro}

This section specifies key implementation details, including random-number generation, checkpoint saving, and evaluation conventions, to ensure the reproducibility of the BriLLM-MetaPred experiments.

\subsection{Full Configurations and Reproduction Records}
\label{sec:exp-details}

The preceding sections present the main comparisons and their interpretation. This section collects the training configurations and full-precision values for verification.

\subsubsection{Model Implementation, Data, and Training Configurations}
\label{sec:exp-config}

The experimental implementation is based on BriLLM-MetaPred's state network, controller network, and training algorithm. Experiments differ in training regime, parameter scale, training length, and evaluation mode.

\begin{table}[htp]
\caption{Main experimental configurations. Parameter counts are based on registered parameters.}
\label{tab:exp-overview}
\centering\small
\setlength{\tabcolsep}{3pt}
\renewcommand{\arraystretch}{1.15}
\begin{tabular}{@{}>{\raggedright\arraybackslash}p{\dimexpr0.32\linewidth-6pt\relax}>{\raggedright\arraybackslash}p{\dimexpr0.14\linewidth-6pt\relax}>{\raggedright\arraybackslash}p{\dimexpr0.16\linewidth-6pt\relax}>{\raggedright\arraybackslash}p{\dimexpr0.17\linewidth-6pt\relax}>{\raggedright\arraybackslash}p{\dimexpr0.21\linewidth-6pt\relax}@{}}
\toprule
\textbf{Experiment} & \textbf{Parameters} & \textbf{Node count/width} & \textbf{Vocabulary} & \textbf{Actual training tokens} \\
\midrule
BookCorpus continuous-execution training regime & 13.2226M & 96/132 & 40,478 BPE & 25,014,272  \\
THUCNews sparse-auxiliary training regime 13M & 13.0945M & 256/256 & 6,512 & 25,000,960  \\
THUCNews sparse-auxiliary training regime 129M & 129.3854M & 256/928 & 6,512 & 470,404,384  \\
\bottomrule
\end{tabular}
\end{table}

\paragraph{BookCorpus data splits and preprocessing.}
After whole-book SHA256 exact deduplication, 17,785 of BookCorpusOpen's 17,868 original documents remain. They are split by document hash in a 98:1:1 ratio, yielding 17,332/167/176 books in the training/validation/test sets after preprocessing. The vocabulary contains the 40,478 entries of the official GPT-1 BPE vocabulary. Preprocessing uses spaCy 3.8.7 blank-en and ftfy 6.3.1.

Blocks remain within a single book, with both length and stride set to 512; short tails are discarded. The 496 next-token targets at zero-based positions 16--511 are evaluated. The training set contains 2,903,718 blocks and 1,440,244,128 training tokens; the full validation set contains 25,540 blocks and 12,667,840 tokens; test-set metadata records 29,844 blocks and 14,802,624 tokens. Each step's global batch size of 64 corresponds to 31,744 training tokens, and 788 steps give an actual training-token count of 25,014,272.

For the 13M comparison models, BriLLM-MetaPred's state network has a node count/width of 96/132, with 16 nodes each in the input and output regions, 40 modal nodes, and FFN width 528. The GPT baseline has 12 layers, width 192, 3 attention heads, FFN width 768, and dropout 0.1.

The shared optimizer is AdamW, with peak learning rate $2.5\times10^{-4}$, 40 steps of linear warm-up followed by cosine decay, $\beta=(0.9,0.999)$, $\epsilon=10^{-8}$, weight decay 0.01, and global gradient clipping 1. Training uses FP32 with TF32 enabled; formal evaluation disables TF32. BriLLM-MetaPred uses SDAA's $64\times1$ topology and flattened CPU FP32 Gloo mean synchronization; GPT uses A100's $2\times32$ CUDA DDP.

The number of autonomous evolution steps in BriLLM-MetaPred is 5, and each training token uses 1 generation evolution step. The fast-parameter rank is 8, the local learning rate is 0.05, and the clipping cap is 1. The history capacity is 64, the candidate-record pool size is 8, and the pending-prediction capacity is 8. Short-horizon/long-horizon simulations use 2/5 steps, the direct-measurement calibration interval is 8, the measurement/update budgets are 4096/256, and the coverage interval is 32. Correction strengths are $\{0,0.25,0.5\}$, with an absolute RMS cap of 0.05 and a relative natural-evolution cap of 0.25.

The internal objective coefficients are $\lambda_{\mathrm{pred}}=1$, $\lambda_{\mathrm{fp}}=0.1$, and $\lambda_{\mathrm{dyn}}=0.5$, with outer intrinsic loss coefficients of 0.01/0.1. The state, measurement, and energy trend weights are 1/0.01/0.1; the local state trend coefficient is 0.001 and the direct-measurement calibration coefficient is 1. The independent controller uses Adam and executes PPO every 64 steps, with 2 epochs, mini-batch size 32, clipping 0.2, discount 0.99, GAE 0.95, value coefficient 0.5, and entropy coefficient 0.01. The training reward additionally includes task cross-entropy feedback weighted by 0.01. Formal evaluation freezes the controller and stops PPO data collection.

\paragraph{THUCNews data splits.}
THUCNews-v3 is split in a 90:5:5 ratio by exact and conservative near-duplicate article groups, with 728,267/40,382/40,533 articles in the training/validation/test sets. The vocabulary consists of tokens occurring at least 5 times in the training set and contains 6,512 entries. Full hyperparameters and optimizer configurations are provided in Appendix~\ref{sec:exp-details}.

\paragraph{Execution details of the sparse-auxiliary training regime.}
The 13M model in the sparse-auxiliary training regime uses a state network with 256 nodes and node width 256, and is trained on 25,000,960 tokens.
The 129M model's state network has 256 nodes and node width 928.

\subsection{Random-Number Generation and Management}
MetaPred involves two types of randomness: the inherent randomness of parameter initialization, and random operations during training/inference, such as policy sampling and dropout. The following mechanisms are used to ensure reproducibility:
\begin{itemize}
    \item \textbf{Global RNG}: used for parameter initialization and controlled by the main seed (e.g., 42).
    \item \textbf{Runtime RNG}: each sequence runtime $z_u$ contains an independent RNG state $\xi_u$. This state is updated as the evolution step advances, ensuring that
    \begin{enumerate}
        \item the random streams for simulated evolution (self-measurement) and actual state evolution are fully isolated;
        \item different operating modes of the same sequence (\textsc{Train}/\textsc{Frozen}) produce identical random decisions, such as dropout masks, at the same evolution step, provided the RNG state is correctly saved and restored.
    \end{enumerate}
    \item \textbf{Policy sampling}: the controller samples actions using the runtime RNG, ensuring causal consistency, i.e., sampling depends only on current and past information.
\end{itemize}

\subsection{Checkpoint Format and Contents}
A training checkpoint must save the following states to support resuming training from any evolution step or performing causally consistent evaluation:
\begin{itemize}
    \item Slow parameters $\Slow$ (including all learnable weights and biases).
    \item Control parameters $\boldsymbol{\psi}$.
    \item Optimizer states (AdamW's first- and second-moment estimates).
    \item The current evolution-step counter $u$, token-step counter $t$, and parameter-group index $q$.
    \item The full state $z_u$ of the most recent sequence runtime (including $\mathbf{S}_u, \Fast_u, \mathcal{H}_u, \mathcal{B}_u, \xi_u$), for resuming online evaluation.
\end{itemize}

Checkpoint filenames include the global step count and a state label. Candidate updates rejected by the slow-update preview mechanism are saved separately.

\subsection{Evaluation Conventions}
MetaPred's three operating modes (\textsc{Train}, \textsc{Frozen}, \textsc{Adaptive}) serve different evaluation purposes, with the following conventions:
\begin{itemize}
    \item \textbf{\textsc{Train}-mode diagnostics}: the task loss $\mathcal{L}_{\text{task}}$ and self-measurement loss $\mathcal{L}_{\text{self}}$ are computed on the validation set to monitor training convergence. These diagnostics use the full \textsc{Train} configuration, including fast-parameter updates and state correction, while keeping the slow parameters fixed; they provide a reading of the training-process state.
    
    \item \textbf{\textsc{Frozen}-mode evaluation}: all parameters ($\Slow, \Fast_u, \boldsymbol{\psi}$) are frozen, fast-parameter updates and state correction are disabled, and dynamical evolution, self-measurement, and prediction readout are executed. External task loss can still be computed in this mode, but only to evaluate model performance. The self-measurement residual directly measures the model's internal consistency at its current state. Controller actions are still computed and recorded but have no actual effect.
    
    \item \textbf{\textsc{Adaptive}-mode evaluation}: this mode simulates actual deployment. The slow parameters $\Slow$ and control parameters $\boldsymbol{\psi}$ are frozen, but fast-parameter updates and state correction are allowed. During evaluation, the sequence runtime state $z_u$ is reset at sequence boundaries. This mode measures the model's adaptive capability in online-learning scenarios.
\end{itemize}

\subsection{Hardware and Numerical Precision}
\begin{itemize}
    \item \textbf{Hardware environment}: experiments are performed on NVIDIA A100, A800, and SDAA, using CUDA 11.8 or later.
\item \textbf{Numerical precision}: all current formal experiments use \texttt{float32} (FP32) precision.
    \item \textbf{Deterministic algorithms}: experiments with particularly strict reproducibility requirements enable CuDNN deterministic mode (\texttt{CUDNN\_DETERMINISTIC=1}), at the cost of some training speed.
\end{itemize}

\subsection{Code and Data Availability}
The model is implemented in PyTorch. Training uses the public datasets BookCorpusOpen, text8, and THUCNews-v3 under their respective data-use agreements. Preprocessing scripts, model definitions, and training/evaluation code will be released with the paper to ensure full reproducibility.

\end{CJK}
\end{document}